\documentclass[a4paper]{article}
\usepackage{fullpage}
\usepackage{natbib}
\usepackage[utf8]{inputenc}
\usepackage[T1]{fontenc}
\usepackage{amsmath,amssymb,amsfonts,amsthm,mathrsfs,bm,mathtools}
\usepackage{graphicx,booktabs,microtype,xcolor,enumitem}
\usepackage[ruled,vlined]{algorithm2e}
\usepackage{hyperref,url,prettyref}
\usepackage{thmtools,thm-restate}
\hypersetup{colorlinks=true,linkcolor=blue,citecolor=blue,urlcolor=blue}
\allowdisplaybreaks

\newtheorem{lemma}{Lemma}
\newtheorem{assumption}{Assumption}
\newtheorem{definition}{Definition}

\newcommand{\R}{\mathbb{R}}

\newcommand{\Reg}{\mathrm{Reg}}
\newcommand{\order}{\mathcal{O}}
\newcommand{\calX}{\mathcal{X}}
\newcommand{\cX}{\mathcal{X}}
\newcommand{\calG}{\mathcal{G}}
\newcommand{\calT}{\mathcal{T}}
\newcommand{\bx}{\mathbf{x}}
\newcommand{\bg}{\mathbf{g}}

\newcommand{\x}{\mathbf{x}}
\newcommand{\wh}{\widehat}

\newcommand{\inner}[1]{\left\langle #1\right\rangle}
\DeclareMathOperator*{\argmin}{arg\,min}

\newcommand{\pref}[1]{\prettyref{#1}}
\newcommand{\savehyperref}[2]{\texorpdfstring{\hyperref[#1]{#2}}{#2}}
\newrefformat{eq}{\savehyperref{#1}{Eq.~\textup{(\ref*{#1})}}}
\newrefformat{eqn}{\savehyperref{#1}{Eq.~\textup{(\ref*{#1})}}}
\newrefformat{lem}{\savehyperref{#1}{Lemma~\ref*{#1}}}
\newrefformat{def}{\savehyperref{#1}{Definition~\ref*{#1}}}
\newrefformat{line}{\savehyperref{#1}{Line~\ref*{#1}}}
\newrefformat{thm}{\savehyperref{#1}{Theorem~\ref*{#1}}}
\newrefformat{sec}{\savehyperref{#1}{Section~\ref*{#1}}}
\newrefformat{app}{\savehyperref{#1}{Appendix~\ref*{#1}}}
\newrefformat{asm}{\savehyperref{#1}{Assumption~\ref*{#1}}}
\newrefformat{alg}{\savehyperref{#1}{Algorithm~\ref*{#1}}}

\title{Decentralized Online Exp-Concave Optimization}
\author{
  \begin{tabular}{ccc}
    Hao Qiu$^{1}$\thanks{Equal contribution.} & Mengxiao Zhang$^{2}$\footnotemark[1] & Yuanyu Wan$^{3,4}$\thanks{Corresponding author.} \\
    {\small\texttt{qiuhaosai@gmail.com}} & {\small\texttt{mengxiao-zhang@uiowa.edu}} & {\small\texttt{wanyy@zju.edu.cn}}
  \end{tabular}\\[0.8em]
  {\small $^{1}$National University of Singapore, Singapore}\\[0.3em]
  {\small $^{2}$Department of Business Analytics, Tippie College of Business,}\\
  {\small University of Iowa, USA}\\[0.3em]
  {\small $^{3}$School of Software Technology, Zhejiang University, Ningbo, China}\\[0.3em]
  {\small $^{4}$State Key Laboratory of Blockchain and Data Security,}\\
  {\small Zhejiang University, Hangzhou, China}
}
\date{}
\begin{document}
\maketitle
\begin{abstract}
Decentralized Online Convex Optimization (D-OCO) is a fundamental framework in which a set of local agents aims to minimize a sequence of global loss functions using only local computations and communications. While recent work has established near-optimal regret guarantees for convex and strongly convex losses, the case of exp-concave losses remains an open challenge: specifically, whether logarithmic regret is achievable. In this paper, we answer this question affirmatively by proposing Decentralized  Online Newton Step (D-ONS), which adapts the idea of blocking and accelerated gossip to the Online Newton Step algorithm in a non-trivial manner. Concretely, our algorithm achieves a regret bound of $\mathcal{O}\left( Nd(1-\sigma_2)^{-1/2}\,\log^2(T) \right)$ where $N$ is the number of agents, $d$ is the dimension, $T$ is the horizon, and $1-\sigma_2$ represents the spectral gap of the network. To our knowledge, this is the first logarithmic regret bound for decentralized online exp-concave optimization. We further establish a nearly matching lower bound of $\Omega\left(Nd (1-\sigma_2)^{-1/2}\,\log(T/N)  \right)$ showing the near optimality of our upper bound. Finally, we extend our algorithm to decentralized  online linear regression in which the feasible domain is unbounded and features and labels are locally observed. By gossiping covariance and label-feature statistics, we also achieve logarithmic regret. 
\end{abstract}

\section{Introduction}
\label{sec:intro}
Decentralized online convex optimization (D-OCO) \citep{yan2012distributed, hosseini2013online, wan2024nearly} provides a powerful framework for sequential decision making in distributed systems, where multiple agents \emph{collaboratively} optimize a global objective while only observing \emph{local} information and communicating through a network. In this setting, agents repeatedly choose decisions, incur global losses but only observe local feedback, and exchange information with neighboring agents in order to compete with the best fixed decision for the global losses in hindsight. This paradigm naturally captures a variety of applications including multi-robot systems \citep{shorinwa2024distributed}, sensor networks \citep{li2002detection}, online packet routing \citep{awerbuch2004adaptive}, where centralized coordination is either infeasible due to communication constraints or undesirable due to privacy and robustness considerations.

For general convex losses, early studies by \citet{yan2012distributed, hosseini2013online} established decentralized online algorithms with regret bounds $\mathcal{O}(N^{5/4}(1-\sigma_2)^{-1/2}\sqrt{T})$ and $\mathcal{O}(N^{3/2}(1-\sigma_2)^{-1}\log(T))$ for convex and strongly convex losses, respectively. Here, $N$ denotes the number of agents, $T$ is the horizon, and $1-\sigma_2$ represents the spectral gap of the communication matrix. More recently, \citet{wan2024nearly, wan2025optimal} tightened these guarantees using accelerated gossip and block-based scheduling, achieving $\mathcal{O}(N(1-\sigma_2)^{-1/4}\sqrt{T \log(N)} )$ regret for convex losses and $\mathcal{O}(N(1-\sigma_2)^{-1/2}\,\log(N) \log(T))$ for strongly convex losses, alongside nearly matching lower bounds.

Despite these advances, the landscape for decentralized OCO with exp-concave losses remains largely unexplored. This class of functions captures a wide range of applications, including online portfolio selection, logistic regression, and online linear regression on bounded domains. Similar to the strongly convex case, exp-concavity allows for significantly faster convergence rates in centralized settings than general convexity. Specifically, the seminal Online Newton Step (ONS) algorithm~\citep{hazan2007logarithmic} achieves $\mathcal{O}(d \log(T))$ regret for $d$-dimensional exp-concave losses. However, the possibility of achieving such fast rates with the decentralized communication remains an open challenge. This leads to the following central question of this work:

\begin{quote}
\centering
\emph{Can decentralized agents exploit exp-concavity to achieve logarithmic regret while communicating only through local gossip?}
\end{quote}

\paragraph{Our contributions.}
In this paper, we answer the above question affirmatively.  More concretely:

\begin{itemize}[leftmargin=*,nosep]
    \item We propose Decentralized Online Newton Step (D-ONS), a decentralized variant of Online Newton Step for exp-concave losses under a gossip communication protocol. Specifically, we obtain a regret bound of 
    $
    \mathcal{O}(
    N d 
    (1-\sigma_2)^{-1/2}\,\log^2(T)
    ).$
    To the best of our knowledge, this is the first algorithm to achieve logarithmic regret for exp-concave losses under D-OCO,  answering an open question proposed in  \citet{wan2025black}. Technically, our algorithm combines three ingredients: a blocking scheme that allows communication to be amortized over time, an accelerated gossip protocol to obtain a sharp dependency on the network spectral gap, and a delayed ONS update adapted to the exp-concave structure. 
    \item  We further complement our upper bound with a nearly matching lower bound, demonstrating that the polynomial dependence on the number of agents, the dimension, and the network spectral gap is essentially unavoidable. In particular, we adapt the hard instance of online exp-concave optimization from~\citet{ordentlich1998cost} to the decentralized setting. This establishes that no algorithm can achieve a regret better than $\Omega\left(Nd(1-\sigma_2)^{-1/2}\,\log(T/N)\right)$, showing that our upper bound is near-optimal up to logarithmic factors.

    \item Finally, we extend our approach to decentralized online linear regression, where the decision domain for each agent is \emph{unbounded} and thus not captured by standard exp-concave analysis. Specifically, we propose a decentralized online ridge regression algorithm with clipped predictions that achieves a regret bound of  $
    \mathcal{O}(
    N d 
    (1-\sigma_2)^{-1/2}\,\log^2(T)
    )$, demonstrating that logarithmic regret remains attainable even in the absence of a compact feasible set in online linear regression.
\end{itemize}

\subsection{Related Works}
\paragraph{Decentralized online convex optimization.}
Decentralized online convex optimization (D-OCO) studies sequential decision making over a network, where multiple agents collaboratively minimize a global objective without relying on a central coordinator. At each round, agents observe only local loss information and communicate with neighboring agents through a graph, making consensus and gossip-based methods central algorithmic tools. Foundational work in decentralized optimization was developed primarily in offline settings, where a standard gossip step \cite{xiao2004fast} was used to enable decentralized consensus and optimization \citep{nedic2009distributed,duchi2011dual}. The online counterpart was first systematically studied by \citet{yan2012distributed}, who proposed a Decentralized Online Gradient Descent and established sublinear regret guarantees. In particular, their algorithm achieves a regret bound of $\mathcal{O}(N^{5/4} (1-\sigma_2)^{-1/2}\sqrt{T})$. More recently, \citet{wan2024nearly, wan2025optimal} leverage an accelerated gossip strategy \citep{liu2011accelerated,ye2023multi} together with a blocking update mechanism to develop new D-OCO algorithms that achieve regret bounds of $\mathcal{O}(N(1-\sigma_2)^{-1/4}\sqrt{T \log(N)})$ for convex functions and $\mathcal{O}(N(1-\sigma_2)^{-1/2}\,\log(N)\log(T))$ for strongly convex functions. They further establish matching lower bounds of $\Omega(N(1-\sigma_2)^{-1/4}\sqrt{T})$ and $\Omega(N (1-\sigma_2)^{-1/2} \log(T/N))$ for the convex and strongly convex settings, respectively, demonstrating the near-optimality of their algorithms. See \citet{yuan2024multi} for a comprehensive overview.

\paragraph{Online learning with curved losses.}
Curvature assumptions on the loss function are well known to yield faster regret rates in online convex optimization. For strongly convex losses with $G$-bounded gradients, Online Gradient Descent achieves $\mathcal{O}(G^2 \log(T))$ regret, while for $d$-dimensional exp-concave losses, the Online Newton Step of \citet{hazan2007logarithmic} attains $\mathcal{O}(d \log(T))$ regret. Closely related second-order methods for online linear regression include the Vovk--Azoury--Warmuth forecaster~\citep{vovk2001competitive} and online ridge regression \citep{hoerl1970ridge}. Beyond these foundations, research has expanded into several advanced perspectives, including scale-free guarantees \citep{pmlr-v178-mayo22a}, non-stationarity \citep{zhang2025non}, adaptive guarantees \citep{bartlett2007adaptive, pmlr-v178-luo22a}, and robustness to delayed feedback \citep{qiu2025exploiting}. In contrast to these established paradigms, our work investigates exp-concave online learning in a decentralized setting, where multiple agents must collaboratively minimize the global objective over a communication network. For a more extensive literature review on online learning with curved losses, we refer the reader to several comprehensive surveys \citep{hazan2016introduction, orabona2026}.

\section{Preliminaries}\label{sec:pre}

Throughout this paper, we denote the set $\{1,2,...,m\}$ for some positive integer $m$ by $[m]$ and let $\mathbf{1}$ be an all-one vector in an appropriate dimension. For a matrix $M\in\R^{m_1\times m_2}$, denote its $(i,j)$-th entry by $M_{ij}$.

In this work, we study decentralized online convex optimization (D-OCO). 
The framework consists of a group of $N$ agents interacting with a dynamic environment and communicating over a network modeled by a connected, undirected graph 
$\mathcal G\triangleq (V,E)$. The vertex set $V=[N]$ represents the agents, while the edge set $E$ specifies the available communication links. For each agent $i\in V$, we denote its neighborhood by
$
\mathcal{N}_i \triangleq \{j\in V : (i,j)\in E\}\cup\{i\}.
$
Throughout the paper, we use $V$ and $[N]$ interchangeably. Each agent $i \in [N]$ encounters a sequence of local loss functions $\{f_{t,i}\}_{t=1}^T$, where each $f_{t,i}: \mathcal{X} \subset \mathbb{R}^d  \to \mathbb{R}$ is chosen arbitrarily by an adversary and is unknown to the agents. Below, we list several standard assumptions \citep{yan2012distributed, hosseini2013online, wan2024nearly} and will explicitly specify which are required for each of our results. 

\begin{assumption}
\label{asm: cvx}
For each round $t \in [T]$, the local loss function $f_{t,i}$ of each agent $i \in [N]$ is $G$-Lipschitz continuous over $\mathcal{X}$ with respect to the $\ell_2$ norm, i.e., $|f_{t,i}(\bx) - f_{t,i}(\mathbf{y})| \leq G \|\bx - \mathbf{y}\|_2$ for all $\bx, \mathbf{y} \in \mathcal{X}$.
\end{assumption}

\begin{assumption}
\label{asm: domain}
The feasible set $\mathcal{X}$ is compact and contains the origin ($\mathbf{0} \in \mathcal{X}$). Furthermore, its diameter is bounded by $D$, such that $\|\bx - \mathbf{y}\|_2 \leq D$ for any $\bx, \mathbf{y} \in \mathcal{X}$.
\end{assumption}

In this paper, we mainly focus on \emph{exp-concave losses} \cite{hazan2007logarithmic}, a function class that is broader than the class of strongly convex functions and also enjoys faster convergence rate in classic online optimization. 
\begin{assumption}
\label{asm: expconcave}
For each round $t \in [T]$, the loss function $f_{t,i}:\calX\mapsto\R$ of each agent $i \in [N]$ is $\alpha$-exp-concave over $\mathcal{X}$. That is, the mapping $\bx \mapsto \exp(-\alpha f_{t,i}(\bx))$ is concave for all $\bx \in \mathcal{X}$.
\end{assumption}

The learning protocol of D-OCO involves an interaction between the agents and the environments over $T$ rounds. Without loss of generality, we assume that $T\geq N$. At each round $t \in [T]$, each agent $i \in [N]$ chooses a decision $\bx_{i}(t)$ from the feasible set $\calX$. Simultaneously, the adversary selects the local loss functions $f_{t,i}$ for all $i \in [N]$. Each agent $i$ then observes only the gradient of her \emph{local loss} $\nabla f_{t,i}(\bx_{i}(t))$. Subsequently, agents are permitted to exchange information with their neighbors in the graph $\calG$ to update their internal states for the next round. The objective of each agent $i$ is then to minimize the regret in terms of \emph{the global function}, defined as:
\begin{equation} 
    \Reg_{T,i} \triangleq \max_{\mathbf{x} \in \mathcal{X}}\Reg_{T,i}(\mathbf{x}) \triangleq \max_{\bx \in \mathcal{X}}\left(\sum_{t=1}^T \sum_{j=1}^N f_{t,j}(\bx_{i}(t)) -  \sum_{t=1}^T \sum_{j=1}^N f_{t,j}(\bx)\right)~.
\end{equation}

It remains to specify how agents communicate over the network $\mathcal{G}$. Following \citet{hosseini2013online, yan2012distributed, wan2024nearly}, we model information exchange through a gossip protocol induced by a communication matrix $P \in \mathbb{R}^{N \times N}$. Intuitively, $P$ encodes how each agent aggregates messages from its neighbors: the entry $P_{ij}$ represents the weight that agent $i$ assigns to the information received from agent $j$. Since communication is restricted by the network structure, $P$ is supported on $\mathcal{G}$, and its symmetry and double stochasticity ensure that repeated local averaging preserves the global mean. We formalize these requirements below.

\begin{definition}
\label{def:5}
The communication matrix $P \in \mathbb{R}^{N \times N}$ is supported on the graph 
$\mathcal{G}=([N],E)$, symmetric, and doubly stochastic:
(i) $P_{ij}>0$ only if $(i,j)\in E$ or $i=j$;
(ii)$\sum_{j=1}^N P_{ij}=1$ for all $i\in[N]$;
(iii) $\sum_{i=1}^N P_{ij}=1$ for all $j\in[N]$.
Moreover, $P$ is positive semidefinite, and its second largest singular value 
$\sigma_2(P)$ satisfies $\sigma_2(P)<1$. 
When the dependence on $P$ is clear, we
write $\sigma_2$. Here, for a matrix $P$, we denote by 
$\sigma_1(P) \geq \sigma_2(P) \geq \cdots$ its singular values in nonincreasing order.
\end{definition}

Given a collection of local vectors $\{\mathbf{u}_i\}_{i=1}^N$ (for example, local gradients or iterates), the goal of gossip is to approximately recover the network average $\overline{\mathbf{u}}=\frac{1}{N}\sum_{i=1}^N \mathbf{u}_i$ for every agent using only neighbor-to-neighbor communication. To this end, agents run $L$ rounds of gossip. In each round $k$, agent $i$ updates its local state according to 
\begin{equation}
    \mathbf{u}_i^{k+1} = (1+\theta)\sum_{j\in \mathcal{N}_i} P_{ij}\mathbf{u}_j^{k} - \theta \mathbf{u}_i^{k-1},
\end{equation}
for $k \ge 0$, where $\mathbf{u}_i^0 = \mathbf{u}_i^{-1} = \mathbf{u}_i$. When $\theta=0$, this reduces to the standard gossip iteration \citep{xiao2004fast}; when $\theta>0$, it becomes the accelerated gossip scheme of \citet{liu2011accelerated}, which improves the convergence rate to $\overline{\mathbf{u}}$.

The following lemma extended from \citet{ye2023multi} shows that accelerated gossip allows each local vector to approximate the global average vector with an error of order $\mathcal{O}((TN)^{-2})$ after $L=\Theta((1-\sigma_2)^{-1/2}\,\log(NT))$ by picking $\theta$ appropriately. We include the proof in \pref{sec: app_aux} for completeness.

\begin{restatable}{lemma}{gossip}
\label{lem:gossip-close-exp}
Let $P$ be the communication matrix defined in \pref{def:5}. Let
\begin{align}\label{eqn:para-choice}
    \theta\triangleq
    \frac{1}{1+\sqrt{1-\sigma_2^2(P)}} , \qquad
    L\triangleq
    \left\lceil
    \frac{\sqrt{2}\log(T^2\sqrt{14} N^{5/2})}
    {(\sqrt{2}-1)\sqrt{1-\sigma_2(P)}}
    \right\rceil.
\end{align}
Consider arbitrary vectors
$
    \mathbf{u}_1,\ldots,\mathbf{u}_N\in\mathbb{R}^d
$
satisfying $
    \|\mathbf{u}_i\|_2\le R, i\in[N].
$
Let
\begin{align*}
    U^0
    = U^{-1} =
    \begin{bmatrix}
    \mathbf{u}_1^\top\\
    \vdots\\
    \mathbf{u}_N^\top
    \end{bmatrix},
    \qquad
    \overline{\mathbf{u}}
    =
    \frac{1}{N}\sum_{i=1}^N \mathbf{u}_i^0,
    \qquad
    \overline U
    =
    \begin{bmatrix}
    \overline{\mathbf{u}}^\top\\
    \vdots\\
    \overline{\mathbf{u}}^\top
    \end{bmatrix}.
\end{align*}
Suppose accelerated gossip is run for $L$ iterations with update $U^{k+1}
    =
    (1+\theta)PU^k-\theta U^{k-1}$ for $k\geq 0$. Then, for every agent $i\in[N]$,
    $\|\mathbf{u}_i^L-\overline{\mathbf{u}}\|_2
    \le
    \frac{2R}{T^2N^2}.$
\end{restatable}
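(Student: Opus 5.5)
The plan is to diagonalize the gossip recursion and reduce it to a scalar three-term recurrence, as in \citet{liu2011accelerated,ye2023multi}. Since $P$ is symmetric, PSD and doubly stochastic, write $P=\sum_{m=1}^N \lambda_m \mathbf{v}_m\mathbf{v}_m^\top$ with $\lambda_1=1$, $\mathbf{v}_1=\mathbf{1}/\sqrt{N}$, and $0\le\lambda_m\le\sigma_2<1$ for $m\ge 2$.

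Because $P\mathbf{1}=\mathbf{1}$ and $\mathbf{1}^\top P=\mathbf{1}^\top$, the mean row is invariant. Indeed, $\frac1N\mathbf{1}^\top U^{k+1}=(1+\theta)\frac1N\mathbf{1}^\top U^k-\theta\frac1N\mathbf{1}^\top U^{k-1}$, and since $U^0=U^{-1}$, induction gives $\frac1N\mathbf{1}^\top U^k=\overline{\mathbf{u}}^\top$ for all $k$. So the error $E^k\triangleq U^k-\overline U$ obeys the same recursion $E^{k+1}=(1+\theta)PE^k-\theta E^{k-1}$ with $E^0=E^{-1}$ and $\mathbf{1}^\top E^k=0$.

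Projecting onto each eigenvector $\mathbf{v}_m$ with $m\ge2$ gives a decoupled scalar recurrence
\[
a^{k+1}=(1+\theta)\lambda a^k-\theta a^{k-1},\qquad a^0=a^{-1}.
\]
Its solution is a polynomial $p_k(\lambda)$, a shifted Chebyshev-type polynomial with $p_0=1$, and $E^k=p_k(P)E^0$ restricted to $\mathbf{1}^\perp$.

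The main step is the uniform bound $|p_k(\lambda)|\le c\,(1-\sqrt{1-\sigma_2})^{k}$ for $\lambda\in[0,\sigma_2]$, up to a modest constant $c$ such as $\sqrt{14}$; this is the source of the $\sqrt{14}$ in $L$. The characteristic roots of $r^2-(1+\theta)\lambda r+\theta=0$ have product $\theta$. With $\theta=1/(1+\sqrt{1-\sigma_2^2})$, the discriminant $(1+\theta)^2\lambda^2-4\theta$ is $\le0$ on $[0,\sigma_2]$. This holds because $(1+\theta)^2\sigma_2^2=4\theta$ exactly when $\theta=(1-\sqrt{1-\sigma_2^2})/\sigma_2^2=1/(1+\sqrt{1-\sigma_2^2})$. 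So both roots have modulus $\sqrt\theta$.

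Writing $a^k=\theta^{k/2}(A\cos k\phi+B\sin k\phi)$, the solution can blow up polynomially in the degenerate case $\phi\to0$. I expect this to be the main obstacle: getting a bound like $|a^k|\le(1+k\sqrt{1-\theta})\theta^{k/2}|a^0|$, or following \citet{ye2023multi}'s bound of the form $\sqrt{14}(1-\sqrt{1-\sigma_2})^k$, and absorbing the polynomial factor. I would rely on that lemma's explicit computation. Then $\sqrt\theta\le 1-(1-\tfrac{1}{\sqrt2})\sqrt{1-\sigma_2}$, using $1-\sigma_2^2\ge 1-\sigma_2$ and elementary inequalities. This matches the $(\sqrt2-1)/\sqrt2$ factor in $L$.

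Finally, I would convert to agentwise norms. We have $\|E^L\|_F\le \sqrt{14}\,\rho^L\|E^0\|_F$ with $\rho=1-\frac{\sqrt2-1}{\sqrt2}\sqrt{1-\sigma_2}$. Also $\|E^0\|_F\le\|U^0\|_F\le \sqrt N R$, since subtracting the mean is an orthogonal projection. Using $\rho^L\le\exp(-L\frac{\sqrt2-1}{\sqrt2}\sqrt{1-\sigma_2})\le (T^2\sqrt{14}N^{5/2})^{-1}$ by the choice of $L$ gives
\[
\|\mathbf{u}^L_i-\overline{\mathbf{u}}\|_2\le\|E^L\|_F\le \sqrt{14}\sqrt N R\cdot \frac{1}{T^2\sqrt{14}N^{5/2}}=\frac{R}{T^2N^2}\le\frac{2R}{T^2N^2}.
\]
The factor $2$ leaves slack for a slightly larger constant in the spectral bound.
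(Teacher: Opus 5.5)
The proof you actually assemble is correct and is essentially the paper's: the load-bearing step is the contraction $\|U^L-\overline U\|_F\le\sqrt{14}\,\rho^L\|U^0-\overline U\|_F$ with $\rho=1-(1-\frac{1}{\sqrt2})\sqrt{1-\sigma_2}$ from Proposition~1 of \citet{ye2023multi}, followed by $\log(1-x)\le -x$ with the choice of $L$ and the row-versus-Frobenius comparison, and your bound $\|U^0-\overline U\|_F\le\sqrt N R$ (mean subtraction is an orthogonal projection) is a slightly sharper substitute for the paper's triangle-inequality bound $2\sqrt N R$. Two remarks in your sketch are wrong, though harmless since you defer to that proposition: the rate $(1-\sqrt{1-\sigma_2})^k$ cannot hold for this $\theta$ because both roots have modulus $\sqrt\theta>1-\sqrt{1-\sigma_2}$, and $(1+\theta)^2\sigma_2^2=4\theta$ holds at $\theta=\frac{1-\sqrt{1-\sigma_2^2}}{1+\sqrt{1-\sigma_2^2}}$ rather than at the paper's larger $\theta$, so the discriminant is strictly negative on $[0,\sigma_2]$, no degenerate root arises, and solving with $a^0=a^{-1}$ gives $|p_k(\lambda)|\le\sqrt2\,\theta^{k/2}\le\sqrt2\,\rho^k$ there, which would make your spectral route self-contained.
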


\paragraph{Other notations.}  
Define $\Delta_d\triangleq\{x\in \R^d:\sum_{j=1}^dx_j=1, x_j\geq 0, \forall j\in[d]\}$ to be the $(d-1)$-dimensional simplex. \(I_d\) denotes the \(d \times d\) identity matrix. Let $A_{\mathcal{G}}$ be the adjacency matrix of $\mathcal{G}$. Let
$D_{\mathcal{G}}=\operatorname{diag}(d_1,\ldots,d_N)$ be the diagonal degree matrix, and let
$d_{\max}=\max_{i\in[N]} d_i$ denote the maximum degree of $\mathcal{G}$.
The graph Laplacian of $\mathcal{G}$ is defined as
$
L_{\mathcal{G}} \triangleq D_{\mathcal{G}} - A_{\mathcal{G}}.
$ For a positive semidefinite matrix $M \in \mathbb{R}^{d \times d}$ and $x \in \mathbb{R}^d$, we denote $\|x\|_M=\sqrt{x^{\top} M x}$ to be the Mahalanobis norm induced by $M$ and, if $M$ is positive definite, let $\|x\|_{M^{-1}}=\sqrt{x^{\top} M^{-1} x}$ be the dual norm.

\section{Decentralized Online Exp-concave Optimization}
\label{sec:d-exp-concave}
In this section, we introduce our algorithm for decentralized online exp-concave optimization. We first provide an algorithm that achieves a logarithmic regret upper bound (\pref{sec:upper_exp}), followed by a nearly-matching lower bound (\pref{sec: lower_exp}) to show the optimality of our algorithm.

\subsection{Upper Bound}\label{sec:upper_exp}
In this subsection, we present our algorithm for decentralized online exp-concave optimization. 
Our algorithm builds upon the block-based communication schedule and accelerated gossip mechanism developed in recent decentralized OCO reductions for the convex and strongly convex settings~\citep{wan2024nearly,wan2025black}. However, the exp-concave case cannot be handled by a direct black-box reduction, as unlike the strongly convex setting where the curvature is characterized by a fixed global parameter, the curvature information in exp-concave optimization is decision-dependent and may incur larger consensus errors during communication.
To address this challenge, we propose Decentralized Online Newton Step (D-ONS), which gossips both block gradients and decisions before performing a delayed ONS update.

\begin{algorithm}[ht]
\caption{Decentralized Online Newton Step for each agent $i \in [N]$}
\label{alg:dons}
\nl \textbf{Input:} gossip parameter $\theta$, epoch length $L$, regularization parameter $\beta$, $\eta$, communication matrix $P$

\nl \textbf{Initialization:} set $\bx_i(1)=\hat \bx_i(1)=\mathbf{0}$.

\nl \For{$z=1,\dots,T/L$}{

    \nl Compute
    \begin{align}\label{eqn:update-exp}
        \hat{\bx}_i(z+1)
            =
            \argmin_{\bx\in\calX}
            \sum_{s=1}^{z-2}
            \left(
            \inner{\bg_i(s),\bx}
            +
            \frac{\beta}{2L}\inner{\bg_i(s), \bx_i(s)-\bx}^2 
            \right)
            +
            \frac{\eta}{2}\|\bx\|_2^2.
    \end{align} \label{line: ons}       
            
        \nl Set $k_x=0$ and $k_g=0$ and $\bx_i^0(z+1)=\bx_i^{-1}(z+1)=\hat{\bx}_i(z+1)$ 
    
    \nl \For{$t=(z-1)L+1,\dots,zL$}{

        \nl Play $\bx_i(z)$ and query $\nabla f_{t,i}(\bx_i(z))$ \label{line: decision-exp}       

        \nl \lIf{ $z \ge 2$}{ update
        \(   \bg_i^{k_g+1}(z-1)
        =
        (1+\theta)\sum_{j\in \mathcal{N}_i} P_{ij}\bg_j^{k_g}(z-1)
        -
        \theta \bg_i^{k_g-1}(z-1)
        \)} \label{line:gradient_gossip}

        \nl Update
        \(
        \bx_i^{k_x+1}(z+1)
        =
        (1+\theta)\sum_{j\in \mathcal{N}_i} P_{ij}\bx_j^{k_x}(z+1)
        -
        \theta \bx_i^{k_x-1}(z+1)
        \) \label{line:decision_gossip}
        
        \nl Set $k_x \leftarrow k_x+1$ and $k_g \leftarrow k_g+1$
    }

    \nl Update
    $
    \bx_i(z+1)
    =
    \argmin_{\bx\in\calX}\|\bx_i^L(z+1)-\bx\|_2
    $
    
    \nl \lIf {$z \ge 2$}{set $\bg_i(z-1)=\bg_i^L(z-1)$}

    \nl Set
    $
    \bg_i^0(z)=\bg_i^{-1}(z)
    =
    \sum_{\tau \in \mathcal{T}_z}\nabla f_{\tau,i}(\bx_i(z))
    $
}
\end{algorithm}

More specifically, the algorithm partitions the time horizon $T$ into $T/L$ blocks indexed
by $z$, where $
    \mathcal{T}_z
    \triangleq
    \{(z-1)L+1,\ldots,zL\}$
denotes the set of time steps in block $z$.\footnote{For notational convenience, we assume that $T/L$ is an integer.} Before introducing the algorithm, for notational convenience, we define $\overline{\bg}(z)$ and $\overline{\bx}(z)$ as follows:
\begin{align}\label{eqn:avg-definition}
    \overline{\bg}(z)\triangleq\frac{1}{N} \sum_{t\in \calT_z}\sum_{j=1}^N \nabla f_{t, j}\left(\mathbf{x}_j(z)\right),~~~~~\overline{\mathbf{x}}(z) \triangleq \frac{1}{N}\sum_{j=1}^N \hat{\mathbf{x}}_j(z).
\end{align}
Specifically, 
$\overline{\mathbf{g}}(z)$ denotes the network average of the cumulative gradients collected during block \(z\), while $\overline{\mathbf{x}}(z)$ denotes the average of the preparatory decisions. \pref{alg:dons} proceeds over these blocks as follows. 

\begin{itemize}[leftmargin=*,nosep]
    \item \textbf{Step 1: Local Preparatory Update.} Before making a decision for the current block $z$, each agent $i$ computes a local preparatory decision $\hat{\mathbf{x}}_i(z+1)$. This is obtained via a delayed ONS update (\pref{line: ons}) using the gossiped aggregated gradient $\mathbf{g}_i(s)$ and decision information $\bx_i(s)$ collected from blocks $s=1$ to $z-2$.

    \item \textbf{Step 2: Decision Execution and Parallel Gossip.} Each agent $i$ plays the fixed decision $\mathbf{x}_i(z) \in \mathcal{X}$ for all rounds $t \in \mathcal{T}_z$ and observes the local gradients $\nabla f_{t,i}(\mathbf{x}_i(z))$ (\pref{line: decision-exp}). Simultaneously, two accelerated gossip processes are executed in parallel:
\begin{itemize}[leftmargin=*,nosep]
    \item \textit{Gradient Gossip:} Agents run accelerated gossip on the gradients collected during the previous block $z-1$ (\pref{line:gradient_gossip}). According to \pref{lem:gossip-close-exp}, by the end of block $z$, the local state $\mathbf{g}_i^L(z-1)$ will approximate the network average of the cumulative gradients $\overline{\mathbf{g}}(z-1)$ within a $\text{poly}(1/T)$ error. This will serve as the feedback for future ONS updates.
    \item \textit{Decision Gossip:} Agents also run accelerated gossip on their local preparatory decisions $\hat{\mathbf{x}}_i(z+1)$. While one might intuitively use $\hat{\mathbf{x}}_i(z+1)$ directly as the decision, these local ONS outputs may \emph{diverge} significantly across agents, complicating the regret analysis. To ensure consensus on the decision over all the agents, we also gossip these vectors so that $\mathbf{x}_i^L(z+1)$ converges toward the network average $\overline{\mathbf{x}}(z+1)=\frac{1}{N}\sum_j \hat{\mathbf{x}}_j(z+1)$ up to a $\text{poly}(1/T)$ consensus error.
\end{itemize}

    \item \textbf{Step 3: Projection and Finalization.} At the end of block $z$, the output of the decision-gossip process $\mathbf{x}_i^L(z+1)$ is projected back onto the feasible set $\mathcal{X}$. This projection is necessary because the linear combinations involved in accelerated gossip may produce iterates that lie outside the convex constraints.\footnote{We point out that the algorithm in \citet{wan2025black} in fact also requires this projection to make sure that $\bx_i(z)$ lies in the feasible set.} The resulting projected vector is then used as the played decision $\mathbf{x}_i(z+1)$ for the next block. This ensures that the decisions made by each agent at each epoch are close to each other. For $z \ge 2$, the refined gradient estimate $\mathbf{g}_i(z-1)$ is fed into the local ONS subroutine to guide the next iteration.
\end{itemize}

Consequently, our algorithm can be viewed as operating with a constant \emph{2-block delay}. Specifically, after agents collect their local cumulative gradients in block $z$, they perform a gossip process during block $z+1$ to approximate the network average of the cumulative gradients. Following the ONS update, a second accelerated gossip process is executed in block $z+2$ to synchronize the decisions for use in block $z+3$. Although this delayed information flow is adapted from the blocking framework of \citet{wan2025black}, the analysis must account for the gradient-dependent curvature of exp-concave losses and the distinct preparatory and executed decisions.

Formally, we state our regret guarantee for  decentralized  online exp-concave optimization as follows:

\begin{restatable}{theorem}{mainexpconcave}
\label{thm:upper-exp}
Suppose that the sequence of loss functions $\{f_{t,i}\}_{t \in[T], i\in[N]}$ satisfy \pref{asm: cvx}, \ref{asm: domain}, and \ref{asm: expconcave}. Let $P$ be the communication matrix defined in \pref{def:5}. If every agent $i\in[N]$ runs \pref{alg:dons} with $\beta=\frac{1}{2}\min\{\frac{1}{4GD}, \alpha\}$ and with parameters $\theta$ and $L$ chosen in \pref{eqn:para-choice},
then, for any agent $i\in[N]$, the regret is bounded by:
\begin{equation}\label{eqn:reg-exp-result}
   \Reg_{T,i}(\bx) = \mathcal{O}\left( \eta N\|\bx\|_2^2 + \frac{Nd(1+\beta GD)L}{\beta} \log\left( 1+\frac{\beta G^2T}{\eta d} \right) \right).
\end{equation}
\end{restatable}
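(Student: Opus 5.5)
We will reduce the decentralized regret to the regret of a (delayed) centralized ONS run on the averaged block gradients, and control every gap between the local and the averaged quantities by the accelerated-gossip error of \pref{lem:gossip-close-exp}.

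\textbf{Step 1: consensus.} Since $\hat\bx_j(z+1)\in\calX$ and $\mathbf{0}\in\calX$, we have $\|\hat\bx_j\|_2\le D$. \pref{lem:gossip-close-exp} therefore gives $\|\bx_i^L(z+1)-\overline\bx(z+1)\|_2\le 2D/(T^2N^2)$. Because $\overline\bx(z+1)\in\calX$ by convexity and projection onto $\calX$ is nonexpansive, this yields $\|\bx_i(z+1)-\overline\bx(z+1)\|_2\le 2D/(T^2N^2)$ for every agent $i$.

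The local block sums satisfy $\|\bg_j^0(z)\|_2\le LG$. Applying \pref{lem:gossip-close-exp} again gives $\|\bg_i(z)-\overline\bg(z)\|_2\le 2LG/(T^2N^2)$.

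\textbf{Step 2: reduction to the average iterate.} By Lipschitzness,
\[
\Reg_{T,i}(\bx)\le \sum_z\sum_{t\in\calT_z}\sum_j\bigl(f_{t,j}(\bx_j(z))-f_{t,j}(\bx)\bigr)+\order(1/T).
\]
Next we use the standard exp-concave inequality (Lemma 4.3 of Hazan's ONS analysis). With $\beta\le\frac12\min\{1/(4GD),\alpha\}$ it gives
\[
f_{t,j}(\bx_j)-f_{t,j}(\bx)\le\inner{\nabla_{t,j},\bx_j-\bx}-\tfrac{\beta}{2}\inner{\nabla_{t,j},\bx_j-\bx}^2 .
\]
Sum this over $t\in\calT_z$. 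Cauchy--Schwarz gives $\sum_t\inner{\nabla_t,\cdot}^2\ge \frac1L\inner{\sum_t\nabla_t,\cdot}^2$, which is where the factor $\beta/(2L)$ in \pref{eqn:update-exp} comes from. We then sum over $j$ and replace $\bx_j(z)$ by the common point $\bx_i(z)$ (all agents agree up to $\order(D/(T^2N^2))$), and the per-agent block gradients by $N\overline\bg(z)$ and then by $N\bg_i(z)$.

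The quadratic term needs care. It is $\sum_j\inner{\bg_j^0(z),\bx_i(z)-\bx}^2$, whereas ONS uses $\inner{N\overline\bg,\cdot}^2$. Applying Cauchy--Schwarz across $j$ gives $\sum_j a_j^2\ge (\sum_j a_j)^2/N$. Hence the surrogate becomes $N\bigl[\inner{\bg_i(z),\bx_i(z)-\bx}-\frac{\beta}{2L}\inner{\bg_i(z),\bx_i(z)-\bx}^2\bigr]$ plus $\order(1)$ perturbations. This is exactly $N$ times the ONS surrogate in \pref{eqn:update-exp}, and it explains the factor $N$ in the bound.

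\textbf{Step 3: delayed FTRL analysis (the main obstacle).} The ONS decision $\hat\bx_i(z+1)$ uses the surrogates from blocks $1,\dots,z-2$ only, while $\bx_i(z)\approx\overline\bx(z)$ is an average over agents of the local $\hat\bx_j(z)$. Since $\bg_j(s)\approx\bg_i(s)$ and the linearization points $\bx_j(s)\approx\bx_i(s)$, the objectives of different agents differ by $\mathrm{poly}(1/T)$. Their minimizers therefore also differ by a small amount, and the strong convexity $\eta$ of the regularizer makes this quantitative. Consequently $\bx_i(z)$ is within $\mathrm{poly}(1/T)$ of agent $i$'s own FTRL iterate with a constant delay.

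We then bound FTRL with quadratic surrogates, regularizer $A_z=\eta I+\frac{\beta}{L}\sum_{s<z}\bg(s)\bg(s)^\top$, and delay $2$ (equivalently, an ahead-of-time argument with a stale iterate). The regret is at most
\[
\tfrac{\eta}{2}\|\bx\|^2+\order\Bigl(\sum_z \|\bg(z)\|_{A_{z+1}^{-1}}\bigl(\|\bg(z)\|_{A_{z+1}^{-1}}+\textstyle\sum_{k=1}^{2}\|\bg(z-k)\|_{A_{z+1}^{-1}}\bigr)\Bigr).
\]
The delay terms are handled by AM--GM, together with the observation that $A_{z+1}\succeq A_{z-1}$ up to a constant factor. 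This constant follows from $\frac{\beta}{L}\|\bg\|_{A^{-1}}^2$ being bounded using $\beta GD\le 1/8$, and it produces the factor $(1+\beta GD)$. The elliptical potential lemma, $\sum_z\frac{\beta}{L}\|\bg(z)\|^2_{A_{z+1}^{-1}}\le d\log(1+\beta G^2T/(\eta d))$ (using $\|\bg(z)\|^2\le L^2G^2$ over $T/L$ blocks), then gives
\[
\frac{L}{\beta}\, d\log\Bigl(1+\frac{\beta G^2T}{\eta d}\Bigr).
\]
Multiplying by $N$ from Step 2 yields the claimed bound.

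I expect the hardest part to be making the stability of the argmin under perturbed quadratics rigorous without losing more than $\mathrm{poly}(1/T)$, and correctly pairing the delayed gradients with the mismatched regularizers. I would handle both through the strong convexity of the objective in the $A_z$-norm.
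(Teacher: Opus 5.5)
Steps~1 and~2 are sound, and the delayed-FTRL display in Step~3 is correct: it is what the paper proves by comparing $\hat{\bx}(z)$ with the full-information leader in the $A_{z+1}$-norm. The gap is the first paragraph of Step~3, where you collapse onto agent $i$'s own ONS iterate.

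\textbf{Why the collapse fails.} Your reduction leaves a term $N\sum_z\inner{\bg_i(z),\overline{\bx}(z)-\hat{\bx}_i(z)}$. It is negligible only if $\|\overline{\bx}(z)-\hat{\bx}_i(z)\|_2\lesssim 1/(NTG)$. But the agents' delayed objectives differ through \emph{accumulated} gossip errors such as $\sum_{s\le z-3}(\bg_j(s)-\bg_i(s))$. Each block contributes $O(LG/(T^2N^2))$, and there are up to $T/L$ blocks, so the total is only $O(G/(TN^2))$.
\begin{itemize}[leftmargin=*,nosep]
\item If you use closeness of function values plus $\eta$-strong convexity, as you phrase it, you get $\|\hat{\bx}_j(z)-\hat{\bx}_i(z)\|_2=O(\sqrt{GD/(\eta T)}/N)$. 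The term can then only be bounded by $O(G\sqrt{GDT/\eta})$, which destroys the logarithmic rate.
\item A gradient-level perturbation argument gives $O(G(1+\beta GD)/(\eta TN^2))$ per iterate, and hence an extra additive $O(G^2(1+\beta GD)/(\eta N))$. The theorem's bound does not absorb this, since it depends on $\eta$ only through $\log(1/\eta)$ as $\eta\to0$.
\item Switching to the $A$-norm does not help. The component of the accumulated error outside the span of the gradients is controlled only by $\eta$.
\end{itemize}

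\textbf{The fix.} The collapse is unnecessary. Write $\bx_i(z)\approx\overline{\bx}(z)=\frac1N\sum_j\hat{\bx}_j(z)$ and use linearity:
$N\inner{\bg_i(z),\overline{\bx}(z)-\bx}=\sum_j\inner{\bg_i(z),\hat{\bx}_j(z)-\bx}\approx\sum_j\inner{\bg_j(z),\hat{\bx}_j(z)-\bx}$.
Likewise split the quadratic term back into $\frac{\beta}{2L}\sum_j\inner{\bg_j(z),\bx_j(z)-\bx}^2$. All errors here are $O(1/(NT))$, and this is exactly the paper's regret decomposition. Then run your Step~3 for each agent $j$ separately. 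That analysis never needs $\hat{\bx}_j(z)$ to be close to $\bx_j(z)$ or to other agents' iterates. The paper places the quadratic terms, centered at the fixed points $\bx_j(s)$, in the regularizer $\psi^\star$. BTRL then returns $\psi^\star_{T/L,j}(\bx)$, which cancels the negative quadratic exactly. Your surrogate-regret view works equally well, since it exceeds the target only by the nonnegative terms $\frac{\beta}{2L}\inner{\bg_j(z),\bx_j(z)-\hat{\bx}_j(z)}^2$.

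\textbf{A smaller error in the delay terms.} Your justification here is wrong, though the conclusion survives. Moving norms backward, $A_{z+1}\preceq cA_{z-1}$, would need $\frac{\beta}{L}\|\bg\|^2_{A^{-1}}=O(1)$. That quantity can be as large as $\beta LG^2/\eta$, and $\beta GD\le 1/8$ does not control it. You do not need it:
\begin{itemize}[leftmargin=*,nosep]
\item Add the strong-convexity inequalities of the delayed objective (Hessian $A_{z-2}$) and of the full-information objective (Hessian $A_{z+1}$).
\item Drop the $A_{z-2}$ term and measure stability in $A_{z+1}$.
\item AM--GM then needs only the trivial monotonicity $\|\bg(s)\|_{A_{z+1}^{-1}}\le\|\bg(s)\|_{A_{s+1}^{-1}}$ for $s\le z$.
\end{itemize}
The factor $(1+\beta GD)$ does not come from a matrix comparison. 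It comes from the difference of the quadratic terms at the two minimizers, $\frac{\beta}{2L}|\inner{\bg(s),\hat{\bx}(z)+\bx^\star(z)-2\bx(s)}|\le\beta GD$.
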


By substituting the choice of $L$ from \pref{eqn:para-choice} into \pref{eqn:reg-exp-result}, we obtain a total regret of $\mathcal{O}\left(Nd (1-\sigma_2(P))^{-1/2}\,\log^2(T)  \right)$. To our knowledge, this is the first logarithmic regret guarantee for decentralized online exp-concave optimization. We complement this result with a lower bound of $\Omega\left( Nd (1-\sigma_2(P))^{-1/2}\,\log(T/N) \right)$ in \pref{sec: lower_exp}, demonstrating the near-optimality of our upper bound with respect to $T$, the number of agents $N$, the dimension $d$, and the network spectral gap $1-\sigma_2(P)$.

\subsection{Analysis of~\pref{alg:dons}}
\label{sec:analysis-upper-exp}
In this subsection, we provide a proof sketch of \pref{thm:upper-exp}.
The complete proof is deferred to \pref{app:appexpconcave}. The analysis consists of two main steps. First, we provide a decentralized-to-block-level reduction, showing that the regret can be controlled by a sum of local ONS-type surrogate terms up to lower-order gossip errors.
For any comparator \(\mathbf{x}\in\mathcal X\), utilizing the Lipschitz property, the property of exp-concavity, and the consensus guarantee in \pref{lem:gossip-close-exp}, we obtain that
\begin{align}
f_{t,j}\bigl(\mathbf{x}_i(z)\bigr)-f_{t,j}(\mathbf{x})
&\le
\left\langle
\nabla f_{t,j}\bigl(\mathbf{x}_j(z)\bigr),
\overline{\mathbf{x}}(z)-\mathbf{x}
\right\rangle
-
\frac{\beta}{2}
\left\langle
\nabla f_{t,j}\bigl(\mathbf{x}_j(z)\bigr),
\mathbf{x}_j(z)-\mathbf{x}
\right\rangle^2 \nonumber \\ 
&\qquad+
\order\big(G\|\overline{\bx}(z)-\bx_i(z)\|+\|\overline{\bx}(z)-\bx_j(z)\|\big).
\label{eq: regret-de-1-main}
\end{align}
Specifically, the last term of \pref{eq: regret-de-1-main} can be bounded by $\order(GD/(TN)^2)$ since \(\bx_i(z)\) and \(\bx_j(z)\) are both projected approximations of the network-average decision \(\overline{\bx}(z)\), thus \pref{lem:gossip-close-exp} implies that their deviations from \(\overline{\bx}(z)\) are of order \(\order(D/(T^2N^2))\). Therefore, the last term in \pref{eq: regret-de-1-main} contributes only a lower-order term after summing over all rounds and agents.
For the first two terms, after taking a summation over all $j\in[N]$ and $t\in\calT_z$  and applying Jensen's inequality, with some calculation, we know that they are bounded by
\begin{align}\label{eqn:inter-main-exp}
    N\left\langle
        \overline{\mathbf{g}}(z),
        \overline{\mathbf{x}}(z)-\mathbf{x}
    \right\rangle
    -
    \frac{\beta}{2NL}
    \Big(
        \sum_{t\in\mathcal T_z}\sum_{j=1}^N
        \left\langle
            \nabla f_{t,j}(\mathbf{x}_j(z)),
            \mathbf{x}_j(z)-\mathbf{x}
        \right\rangle
    \Big)^2.
\end{align}

By definition of $\overline{\mathbf{x}}(z)$, the first term of \pref{eqn:inter-main-exp} can be written as $\sum_{j=1}^N\inner{\overline{\bg}(z),\wh{\bx}_j(z)-\bx}$, which can be further bounded by $\sum_{j=1}^N\inner{{\bg}_j(z),\wh{\bx}_j(z)-\bx}$ plus an $\order(1/T)$ approximation error due to \pref{lem:gossip-close-exp}. Via some calculation, the second term of \pref{eqn:inter-main-exp} can be bounded by $-\frac{\beta}{2L}
    \sum_{j=1}^N
    \left\langle
        \mathbf{g}_j(z),
        \mathbf{x}_j(z)-\mathbf{x}
    \right\rangle^2 + \order(\sum_{j=1}^N(\|\bx_j(z)-\overline{\bx}(z)\|_2+\|\bg_j(z)-\overline{\bg}(z)\|_2))$, where the second term is bounded by $\order(1/T)$ due to \pref{lem:gossip-close-exp} again. Combining all of the above, we reduce bounding $\Reg_{T,i}(\bx)$ to bounding
\begin{align}\label{eqn:reduced-term-to-bound}
    \sum_{z=1}^{T/L}\sum_{j=1}^N
    \left(
    \inner{\bg_j(z),\wh{\bx}_j(z)-\bx}
    -
    \frac{\beta}{2L}
    \inner{\bg_j(z),\bx_j(z)-\bx}^2
    \right)
\end{align}
    plus $\order(1)$ term induced by the gossip process. 
The second step is to bound \pref{eqn:reduced-term-to-bound}. 
A natural approach would be to interpret each summand as a surrogate loss and then apply a standard FTRL or ONS regret bound to the preparatory decision \(\widehat{\mathbf{x}}_j(z)\). 
However, unlike the strongly convex setting considered in~\citet{wan2025black}, such a direct argument does not apply here. 
The difficulty is that the linear term is evaluated at the preparatory decision \(\widehat{\mathbf{x}}_j(z)\), whereas the negative quadratic term depends on the executed decision \(\mathbf{x}_j(z)\). 
This mismatch prevents the summand in \pref{eqn:reduced-term-to-bound} from being treated as the value of a standard ONS surrogate loss at a single decision point. 

To overcome this issue, we analyze the delayed FTRL dynamics for \(\widehat{\mathbf{x}}_j(z)\) more carefully. 
The key observation is that although \(\widehat{\mathbf{x}}_j(z)\) is computed using only feedback up to block \(z-3\), it remains close, in the adaptive second-order geometry induced by the accumulated gradients, to the ``current'' regularized leader that has access to feedback up to block \(z\). This closeness allows us to control the price of the two-block delay by a logarithmic term using ellipsoid potential arguments.
The following lemma formalizes this delayed ONS bound. Its proof uses the same local-norm stability and elliptical-potential tools as \citet[Theorem~4.1]{qiu2025exploiting}; in particular, their Lemma~A.2 gives the intermediate stability comparison. Here, $\bg_j(z)$ aggregates gradients evaluated at different agents' executed decisions, while the linear and quadratic terms involve $\widehat{\bx}_j(z)$ and $\bx_j(z)$, respectively. We use the lemma to examine this two-decision structure, rather than highlight it as a separate novel contribution. The self-contained derivation is retained in \pref{app:appexpconcave}.

\begin{restatable}{lemma}{lemdelayedons}
\label{lem:block-ons-exp}
Under the same conditions as \pref{alg:dons}. For any $j\in[N]$ and every comparator $\bx\in\calX$,
\begingroup\medmuskip=2mu\thickmuskip=3mu
\begin{align*}
    &\sum_{z=1}^{T/L}
    \big(
    \langle\bg_j(z),\hat{\bx}_j(z)-\bx\rangle
    -
    \tfrac{\beta}{2L}
    \langle\bg_j(z),\bx_j(z)-\bx\rangle^2
    \big)\leq
    \mathcal{O}\Big(
    \eta\|\bx\|_2^2
    +
    \frac{d(1+\beta GD)L\log\big(
    1+\frac{\beta G^2T}{\eta d}
    \big)}{\beta}
    \Big).
\end{align*}
\endgroup
\end{restatable}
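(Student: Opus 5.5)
Fix an agent $j$ and a comparator $\bx$, and write $h_z(\mathbf{y})\triangleq\inner{\bg_j(z),\mathbf{y}}+\frac{\beta}{2L}\inner{\bg_j(z),\bx_j(z)-\mathbf{y}}^2$. By \pref{eqn:update-exp}, $\hat\bx_j(z)$ is the minimizer over $\calX$ of $F_{z-3}(\mathbf{y})\triangleq\sum_{s\le z-3}h_s(\mathbf{y})+\frac{\eta}{2}\|\mathbf{y}\|_2^2$, so it is a two-block delayed FTRL iterate on the quadratic surrogates $h_s$. Expanding the square, $h_z(\hat\bx_j(z))-h_z(\bx)=\inner{\bg_j(z),\hat\bx_j(z)-\bx}-\frac{\beta}{2L}\inner{\bg_j(z),\bx_j(z)-\bx}^2+\frac{\beta}{2L}\inner{\bg_j(z),\bx_j(z)-\hat\bx_j(z)}^2$. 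The left-hand side of the lemma is therefore $\sum_z(h_z(\hat\bx_j(z))-h_z(\bx))$ minus a nonnegative term, and it suffices to bound the delayed FTRL regret on the $h_z$.

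Let $A_z\triangleq\eta I_d+\frac{\beta}{L}\sum_{s\le z}\bg_j(s)\bg_j(s)^\top$, which is the Hessian of $F_z$. The non-delayed leader $\mathbf{y}_{z+1}=\argmin_{\calX}F_z$ satisfies the standard be-the-leader bound $\sum_z(h_z(\mathbf{y}_{z+1})-h_z(\bx))\le\frac{\eta}{2}\|\bx\|^2$. It then remains to control $\sum_z(h_z(\hat\bx_j(z))-h_z(\mathbf{y}_{z+1}))$, where $\hat\bx_j(z)=\mathbf{y}_{z-2}$. By convexity, $h_z(\mathbf{y}_{z-2})-h_z(\mathbf{y}_{z+1})\le\inner{\nabla h_z(\mathbf{y}_{z-2}),\mathbf{y}_{z-2}-\mathbf{y}_{z+1}}$. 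We split this telescopically as $\sum_{k=0}^{2}(\mathbf{y}_{z-2+k}-\mathbf{y}_{z-1+k})$ and apply the strong-convexity stability argument of \citet[Lemma~A.2]{qiu2025exploiting} to each consecutive pair. Since $F_{z-2+k}$ is $A_{z-2+k}$-strongly convex, first-order optimality gives $\|\mathbf{y}_{m}-\mathbf{y}_{m+1}\|_{A_{m}}\le\|\nabla h_{m}(\mathbf{y}_{m+1})\|_{A_{m}^{-1}}$.

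Every gradient $\nabla h_s(\mathbf{y})=\bg_j(s)\bigl(1-\frac{\beta}{L}\inner{\bg_j(s),\bx_j(s)-\mathbf{y}}\bigr)$ is a scalar multiple of $\bg_j(s)$. The norm $\|\bg_j(s)\|\le LG(1+O((TN)^{-2}))$ follows from \pref{lem:gossip-close-exp}, and the diameter is $D$. With $\beta\le 1/(8GD)$ the scalar multiplier lies in $[1/2,3/2]$, up to a factor $(1+\beta GD)$. Each delay term is then at most $O(1+\beta GD)\,\|\bg_j(z)\|_{A_{m}^{-1}}\|\bg_j(m)\|_{A_{m}^{-1}}$ for $m\in\{z-2,z-1,z\}$. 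Applying $ab\le(a^2+b^2)/2$ reduces everything to sums $\sum_z\|\bg_j(z)\|^2_{A_{z-c}^{-1}}$ with $c\le 3$.

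The main obstacle is that these quadratic forms use stale matrices $A_{z-c}$ rather than $A_z$. Because $A_z\preceq A_{z-c}+\frac{\beta}{L}\sum_{s=z-c+1}^{z}\bg_j(s)\bg_j(s)^\top$ and each rank-one term has $\frac{\beta}{L}\|\bg_j(s)\|^2\le\beta LG^2$, the stale matrix need not be within a constant factor of $A_z$. I would first handle this by truncation: $\|\bg\|^2_{A_{z-c}^{-1}}\cdot\frac{\beta}{L}\le\min\{\beta LG^2/\eta,\ldots\}$, combined with the observation that ``large'' steps, where $\frac{\beta}{L}\|\bg_j(z)\|^2_{A_{z-1}^{-1}}>1$, occur at most $O(d\log(1+\beta G^2T/(\eta d)))$ times by the elliptical potential count. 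Each such step costs at most the per-block loss range $O(LGD+\beta LG^2D^2)$, which is $O(L(1+\beta GD)/\beta)$ given $\beta GD\le 1/8$. On the remaining steps, $A_{z-c}\succeq\frac12A_z$ up to constants, and the elliptical potential lemma gives $\sum_z\frac{\beta}{L}\|\bg_j(z)\|^2_{A_z^{-1}}\le d\log(1+\beta G^2T/(\eta d))$. Multiplying by $L/\beta$ and the factor $(1+\beta GD)$ yields the stated $\frac{d(1+\beta GD)L}{\beta}\log(\cdot)$ term. Adding the $\eta\|\bx\|^2$ term from be-the-leader completes the proof.
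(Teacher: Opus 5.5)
Your proof is correct, but it handles the delay differently from the paper. Both proofs use the same comparator. Your $\mathbf{y}_{z+1}$ is the paper's $\bx_j^\star(z)$, the regularized leader that already includes block $z$, and be-the-leader costs only $\frac{\eta}{2}\|\bx\|_2^2$.

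The two routes split at the stability step. The paper compares $\hat{\bx}_j(z)=\mathbf{y}_{z-2}$ with $\mathbf{y}_{z+1}$ in one shot. It adds the strong-convexity inequality of the \emph{later} objective $F_z$, whose Hessian is $A_j(z)$, to the optimality of $\hat{\bx}_j(z)$ for $F_{z-3}$. This gives $\|\hat{\bx}_j(z)-\bx_j^\star(z)\|_{A_j(z)}\le 2(1+\beta GD)\sum_{s=z-2}^{z}\|\bg_j(s)\|_{A_j(z)^{-1}}$. Every gradient is thus measured in the freshest matrix, and $A_j(s)\preceq A_j(z)$ for $s\le z$ points in the favorable direction. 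AM--GM and the elliptical potential then finish with no case analysis.

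By telescoping through the intermediate leaders, you instead pair $\bg_j(z)$ with the stale norms $\|\cdot\|_{A_m^{-1}}$, $m<z$, where monotonicity fails. That is what forces your large-step truncation. The patch is sound: large steps are counted by the log-determinant, each excluded block costs $O(LGD(1+\beta GD))=O(L/\beta)$, and on the remaining blocks $A_m\succeq 2^{-(z-m)}A_z$. Two details need tightening.
\begin{itemize}
\item The excluded set must contain every $z$ whose window $\{z-2,z-1,z\}$ contains a large step, not just the large steps themselves. This at most triples the count.
\item First-order optimality gives $\|\mathbf{y}_m-\mathbf{y}_{m+1}\|_{A_m}\le\|\nabla h_m(\mathbf{y}_m)\|_{A_m^{-1}}$. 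With $\nabla h_m(\mathbf{y}_{m+1})$ on the right, the bound holds only up to the constant supplied by your multiplier bound; it already fails in one dimension without constraints.
\end{itemize}

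As for what each route buys: yours is the generic delayed-FTRL telescoping and makes explicit how stale preconditioners are paid for. However, it needs $\beta GD=O(1)$ to price the excluded blocks at $O(L/\beta)$. The paper's one-shot comparison is shorter, holds for any $\beta>0$, and yields the explicit constant $6(1+\beta GD)$.
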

The full proof is deferred to \pref{app:appexpconcave}.
Applying \pref{lem:block-ons-exp} to each \(j\in[N]\) and combining it with the regret decomposition above yields the claimed bound in \pref{thm:upper-exp}. This completes the proof sketch.

\subsection{Lower Bound}\label{sec: lower_exp}
In this section, we provide the lower bound of decentralized online exp-concave optimization. Specifically, we show that no algorithm can achieve a regret better than $\Omega(Nd(1-\sigma_2)^{-1/2}\,\log(T/N))$ in decentralized exp-concave online optimization.

\begin{restatable}{theorem}{mainlowerbound}
\label{thm:lowerbound}
Let \(N=4K+2\) for some positive integer \(K\), and let
$\calX=\Delta_d$.\footnote{To align with \pref{asm: domain} used in our upper bound, one can shift the simplex $\Delta_d$ by $\frac{1}{d}\mathbf{1}$ to include the origin. The $\alpha$-exp-concavity is invariant under such constant shifts.} There exists a graph $\mathcal{G}=([N],E)$ with $P=I-\frac{1}{2(d_{\max}+1)} L_{\mathcal{G}} $ such that there exists a sequence of local convex
\(\alpha\)-exp-concave losses $\{f_{t,i}:\calX\to\mathbb R\}_{i\in [N],t\in[T]}$ such that
\(
    \mathbb E[\Reg_{T,1}]
    \ge
    \Omega\Big(\frac{Nd}{\alpha \sqrt{\left(1-\sigma_2(P)\right)}} \log(T/N)\Big).
\)
\end{restatable}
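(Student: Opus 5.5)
The plan is to follow the lower-bound template of \citet{wan2024nearly,wan2025optimal} for the strongly convex case, replacing their one-dimensional strongly convex hard instance with the $\Omega(\frac{d}{\alpha}\log T)$ exp-concave instance of \citet{ordentlich1998cost} for online portfolio selection over $\Delta_d$.

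\textbf{Graph.} The network is a path (or a ``dumbbell'' chain) on $N=4K+2$ vertices. Two features are needed.
\begin{itemize}[leftmargin=*,nosep]
\item Its diameter is $\Theta(N)$, and the middle agents are at graph distance $\Omega(N)$ from agent $1$.
\item For $P=I-\frac{1}{2(d_{\max}+1)}L_{\mathcal G}$ we have $1-\sigma_2(P)=\Theta(\lambda_2(L_{\mathcal G}))=\Theta(1/N^2)$, so $(1-\sigma_2(P))^{-1/2}=\Theta(N)$.
\end{itemize}
Checking that $P$ satisfies \pref{def:5} is routine. $P$ is symmetric, doubly stochastic and supported on $\mathcal G$. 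It is PSD because $\lambda_{\max}(L_{\mathcal G})\le 2d_{\max}$ gives eigenvalues in $[0,1]$. The bound on $\sigma_2(P)$ follows from the known spectrum of the path Laplacian.

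\textbf{Loss construction.} The target is $\Omega(N\cdot N d\log(T/N)/\alpha)$, i.e.\ $N$ times an Ordentlich--Cover bound on $T/N$ effective rounds with an extra factor $N$. All agents except one far-away agent $m$ (at distance $\Delta=\Theta(N)$ from agent $1$) get zero loss. Agent $m$ gets $N\,\ell_t$, where $\ell_t(\bx)=-\frac1\alpha\log\langle \mathbf r_t,\bx\rangle$ is a portfolio loss, rescaled to be $\alpha$-exp-concave. The global loss is then $N\ell_t$, and the adversary changes $\mathbf r_t$ only once per block of length $\Delta$.

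Information about $\mathbf r_t$ needs $\Delta$ communication rounds to reach agent $1$. Agent $1$'s decision in block $b$ therefore depends only on returns from blocks $<b$. Agent $1$ is thus effectively an online learner with full information, playing against $T/\Delta$ rounds of the aggregated loss $\Delta\cdot N\ell$.

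The Ordentlich--Cover minimax bound gives regret $\Omega(\frac{d}{\alpha}\log(T/\Delta))$ per unit scale. Scaling by $N\Delta=\Theta(N^2)=\Theta(N(1-\sigma_2)^{-1/2})$ yields the claimed bound.

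\textbf{Main obstacle: keeping the loss $\alpha$-exp-concave.} The scaled loss $N\ell_t$ is only $(\alpha/N)$-exp-concave, which would cancel the gain. To fix this, I would split the scaling across agents rather than concentrate it on one. The $\Theta(N)$ agents in the far half of the path each receive the same $\alpha$-exp-concave loss $\ell_t$. The near half, containing agent $1$, receives zero loss. With $N=4K+2$, this gives $2K+1$ far agents at distance $\ge K$ from agent $1$, so each local loss stays $\alpha$-exp-concave while the global loss is $\Theta(N)\ell_t$.

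The remaining delay argument must then show that agent $1$'s decision over the first $\Theta(K)$ rounds of each block is measurable with respect to information from earlier blocks. This holds only for part of each block. I would handle it by lower bounding regret on those rounds alone, with a constant factor loss.

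Two further details I expect to need care:
\begin{itemize}[leftmargin=*,nosep]
\item A block-wise reduction of the randomized Ordentlich--Cover instance, so the adversary's distribution is i.i.d.\ across blocks and Yao's principle applies.
\item Checking that the $\log(T/N)$ factor survives the reduction to $T/\Delta$ effective rounds.
\end{itemize}
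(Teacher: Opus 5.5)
\textbf{Overall.} Your plan matches the paper's argument in all but one step. It uses a graph with diameter $\Theta(N)$ and $(1-\sigma_2(P))^{-1/2}=O(N)$; you use a path and the paper a cycle, and either works, e.g.\ for the path $1-\sigma_2(P)=\lambda_2(L_{\mathcal G})/6\ge 2/(3N^2)$. It gives $\Theta(N)$ far agents identical $\frac1\alpha$-scaled Ordentlich--Cover losses and the agents near agent $1$ zero loss, holds the hidden loss fixed within blocks, and reduces to $\Theta(N^2)$ times a centralized portfolio regret on $\Theta(T/N)$ rounds.

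\textbf{The step that fails.} It is your treatment of the rounds of a block that are not blind. You keep the block length $\Delta$ from the single-agent construction, while the nearest loss-carrying agent is only at distance $K$. You then propose to lower bound the regret using only the first $\Theta(K)$ rounds of each block. This is invalid. Regret is measured against a single comparator fixed over the whole horizon, so it does not split into nonnegative per-round pieces, and the discarded rounds can contribute large negative regret. Once information about the current loss has reached agent $1$, it can play that loss's minimizer. For instance, if the current return vector is a unit vector $e_j$, as in the horse-race instances behind the Ordentlich--Cover bound, playing $e_j$ costs $0$. The best fixed portfolio $\bx^\star$ instead pays $-\frac1\alpha\log x^\star_j>0$ per loss-carrying agent in each such round. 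Over the $\Theta(T)$ informed rounds this negative contribution typically grows linearly in $T$ and swamps the $N^2d\log(T/N)/\alpha$ you are trying to prove.

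\textbf{The fix.} Take the block length equal to the minimum graph distance between agent $1$ and the loss-carrying agents, so that every round of every block is blind. Alternatively, set all losses to zero on the non-blind rounds, which makes their contribution exactly $0$. The paper does the former. On the cycle with $N=4K+2$, the $S=2K+3$ agents $K+1,\ldots,N-K+1$ carry $h_b$, all at distance at least $K$ from agent $1$, and blocks have length exactly $K$, giving the scale $SK\ge N^2/8$. On your path with agent $1$ at an endpoint and losses on $\{2K+2,\ldots,4K+2\}$, blocks of length $2K+1$ work equally well.

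\textbf{A step to make explicit.} You summarize the reduction as agent $1$ being ``effectively an online learner.'' Agent $1$ may change its decision within a block. The paper handles this with convexity of $h_b$: it replaces the block's decisions by their average, which is a legitimate full-information online decision for round $b$ of the centralized game. With the block length fixed and this step written out, your argument coincides with the paper's.
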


Compared to this lower bound, our upper bounds are optimal in their polynomial dependence on $1 - \sigma_2(P)$, $d$, $T$, and $N$. While a logarithmic gap in $T$ and $N$ remains, we leave bridging this gap as a future direction.

We provide a proof sketch here and defer the full proof to \pref{app: lower_exp}. 
Our proof is adapted from the construction in~\citet{wan2024nearly}. We consider a carefully designed instance on a $1$-connected cycle graph. The horizon is divided into blocks of length $\Theta(N)$. In each block, we place one hard $\alpha$-exp-concave portfolio loss from~\citet{ordentlich1998cost} on a contiguous middle segment of $\Theta(N)$ agents, while the remaining agents have identically zero local losses. Hence, over each block, the global loss is a $\Theta(N^2)$-scaled copy of one hard exp-concave loss.

Focusing on an agent $u$ in the zero-loss segment, we observe that all agents carrying the non-zero loss are at graph distance $\Theta(N)$ from $u$. Therefore, the information about the current block loss cannot reach $u$ before the block ends. After averaging the decisions of $u$ within each block, its optimization problem reduces to a centralized online exp-concave optimization instance on a sequence of $\Theta(T/N)$ losses. By convexity, the distributed regret of agent $u$ is at least $\Theta(N^2)$ times the regret in this centralized hard instance. Applying the portfolio lower bound of~\citet{ordentlich1998cost} gives
$
    \Omega( \alpha^{-1}N^2 d\log(T/N)).
$
Finally, for the cycle graph, the spectral gap satisfies
$
    1/\sqrt{1-\sigma_2(P)}=\Theta(N),
$
which yields the claimed lower bound.

\section{Decentralized Online Linear Regression}
\label{sec:dolr}
\setcounter{AlgoLine}{0}

In this section, we investigate decentralized online linear regression (D-OLR) over an unbounded domain. The learning protocol proceeds as follows. At each round $t\in[T]$, before making its prediction, agent $i$ observes only its local feature vector $\mathbf h_{t,i}$. Each agent $i$ decides a model $\bx_i(t)\in\mathbb{R}^d$ to generate its prediction. Crucially, after making this decision, agent $i$ only observes its local true label $y_{t,i}\in[-Y,Y]$ corresponding to the feature vector $\mathbf{h}_{t,i}$, while the features and labels of other agents are not directly available to agent $i$. The agents communicate the sufficient statistics needed for their model updates through the network.

Because the true labels are bounded such that $y_{t,i}\in[-Y,Y]$ for all $t\in[T]$ and $i\in[N]$, we define the prediction for a given feature $\mathbf{h}$ and model $\bx$ as the clipped inner product $\Pi_{[-Y,Y]}(\langle\mathbf h, \bx\rangle)$, where $\Pi_{[-Y,Y]}(a)\triangleq \max\{-Y,\min\{a,Y\}\}$. Furthermore, let $\widehat y_{t,j}^{i} \triangleq \Pi_{[-Y,Y]} \left( \left\langle \mathbf h_{t,j}, \mathbf x_i(t) \right\rangle \right)$ denote the prediction of agent $i$'s model evaluated on agent $j$'s feature at round $t$. For $j\ne i$, this is a virtual prediction used only to define regret; its evaluation does not require agent $i$ to observe $\mathbf h_{t,j}$ online. Clipping does not increase the squared loss because the true labels lie in $[-Y,Y]$. It also makes that loss $2Y$-Lipschitz in the unbounded raw prediction, which is essential for controlling the gossip error without imposing a bounded model domain.

We then measure the performance of each agent $i$ using the regret evaluated on the squared loss of their predictions across all global features. Specifically, the regret of agent $i$ with respect to a fixed comparator $\bx$ is defined as follows:
\begin{align*}
    \Reg_{T,i}^{\mathrm{OLR}}(\bx)
\triangleq
\sum_{t=1}^{T}
\sum_{j=1}^{N}
\frac12
\left(
\Pi_{[-Y,Y]}
\left(
\left\langle
\mathbf h_{t,j},
\mathbf x_i(t)
\right\rangle
\right)
-y_{t,j}
\right)^2
-
\sum_{t=1}^{T}\sum_{j=1}^{N}
\frac12
\left(
\left\langle
\mathbf h_{t,j},
\mathbf{x}
\right\rangle-y_{t,j}
\right)^2~.
\end{align*}
We also define the optimal baseline as $\Reg_{T,i}^{\mathrm{OLR}} \triangleq \max_{\bx \in \mathbb{R}^d}\Reg_{T,i}^{\mathrm{OLR}}(\bx)$. Notably, although squared loss is exp-concave on bounded prediction ranges, the unbounded nature of the decision domain in this setting prevents a direct application of the \pref{alg:dons} discussed in the previous section.  Before presenting our algorithm, we formalize a standard boundedness assumption regarding the feature vectors.

\begin{assumption}
    \label{asm:lr}
    The feature vectors $\{\mathbf{h}_{t,i}\}_{t \in[T], i \in N}$ have bounded norms. Specifically, there exists an $H\geq 0$ such that $\|\mathbf{h}_{t,i}\|_2\leq H$ holds for any $t \in [T]$ and $i \in [N]$.
\end{assumption}

For each block $z$, agent $i$ forms the covariance and label-feature statistics
\[
C_i^0(z)=\sum_{\tau\in\mathcal T_z}\mathbf h_{\tau,i}\mathbf h_{\tau,i}^{\top},
\qquad
Y_i^0(z)=\sum_{\tau\in\mathcal T_z}y_{\tau,i}\mathbf h_{\tau,i}.
\]
Both statistics are gossiped during block $z+1$ and are available for use in block $z+2$. We use two ridge-regression copies, assigned to odd and even blocks, so that each copy's previous statistics have finished gossiping before its next prediction~\citep{joulani2013online}. Write $\mathcal S_z=\{s:1\le s\le z-2,\ s\equiv z\pmod 2\}$ for the earlier blocks assigned to the same copy; this set is empty for $z\le2$. We use $\mathcal T_z=\{(z-1)L+1,\ldots,\min\{zL,T\}\}$ to include a final incomplete block.

\begin{algorithm}[t]
\caption{Decentralized online ridge regression for agent $i\in[N]$}
\label{alg:distributed-ridge}
\nl \textbf{Input:} gossip parameter $\theta$, epoch length $L$, regularization parameter $\lambda>0$, communication matrix $P$, clipping level $Y$\;
\nl \For{$z=1,\ldots,\lceil T/L\rceil$}{
    \nl Set $k_Y=0$\;
    \nl \For{$t=(z-1)L+1,\ldots,\min\{zL,T\}$}{
        \nl Observe the local feature $\mathbf h_{t,i}$ and compute
        \[
        \mathbf x_i(z,t)=\left(\lambda I_d+N\sum_{s\in\mathcal S_z}\Pi_{\mathbb S_+^d}(C_i(s))\right)^{-1}N\sum_{s\in\mathcal S_z}Y_i(s).
        \]\label{line: update}
        \nl Predict $\Pi_{[-Y,Y]}(\langle\mathbf h_{t,i},\mathbf x_i(z,t)\rangle)$ and receive only the local label $y_{t,i}$\;
        \nl \If{$z\ge2$}{
            \nl Simultaneously update
            \[
            C_i^{k_Y+1}(z-1)=(1+\theta)\sum_{j\in\mathcal N_i}P_{ij}C_j^{k_Y}(z-1)-\theta C_i^{k_Y-1}(z-1),
            \]
            \[
            Y_i^{k_Y+1}(z-1)=(1+\theta)\sum_{j\in\mathcal N_i}P_{ij}Y_j^{k_Y}(z-1)-\theta Y_i^{k_Y-1}(z-1).
            \]
        }
        \nl Set $k_Y\leftarrow k_Y+1$\;
    }
    \nl \If{$z\ge2$ and $k_Y=L$}{
        \nl Set $C_i(z-1)=C_i^L(z-1)$ and $Y_i(z-1)=Y_i^L(z-1)$\;
    }
    \nl Set $C_i^0(z)=C_i^{-1}(z)=\sum_{\tau\in\mathcal T_z}\mathbf h_{\tau,i}\mathbf h_{\tau,i}^{\top}$\;
    \nl Set $Y_i^0(z)=Y_i^{-1}(z)=\sum_{\tau\in\mathcal T_z}y_{\tau,i}\mathbf h_{\tau,i}$\;
}
\end{algorithm}

We now introduce \pref{alg:distributed-ridge} for the D-OLR problem. Our approach builds upon the framework of block-based accelerated gossip, integrating it with the classic online ridge regression algorithm \citep{hoerl1970ridge,vovk2001competitive}. In contrast to the general exp-concave setting over bounded domains which necessitates the communication of both cumulative gradients and decision vectors, our algorithm exploits a fundamental structural property of D-OLR: the model update depends only on covariance and label-feature statistics, both of which can be aggregated locally and communicated through gossip. Specifically, \pref{alg:distributed-ridge} proceeds over these blocks as follows:

\begin{itemize}[leftmargin=*,nosep]
    \item \textbf{Step 1: Local Ridge Prediction.}
    At each round $t\in\mathcal{T}_z$, agent $i$ computes a ridge-regression estimate $\mathbf{x}_i(z,t)$ (\pref{line: update}) using the covariance and label-feature statistics aggregated via gossip from the earlier blocks $s\in\mathcal S_z$. The model is fixed during the block. Here $\Pi_{\mathbb S_+^d}$ denotes projection onto the positive semidefinite cone in Frobenius norm. This projection is needed because accelerated gossip uses signed combinations and need not preserve positive semidefiniteness; it ensures that the regularized matrix is bounded below by $\lambda I_d$.

    \item \textbf{Step 2: Prediction and Local Feedback.}
    Then, agent $i$ deploys the decision model $\bx_i(z,t)$ and observes only its local ground-truth label $y_{t,i}$.
    With a slight abuse of notation, we define $\wh{y}_{t,j}^{i}(z,t)\triangleq\Pi_{[-Y,Y]}\left(\inner{\mathbf{h}_{t,j},\bx_i(z,t)}\right)$ as the virtual evaluation of agent $i$'s model on agent $j$'s feature using \pref{alg:distributed-ridge}.

    \item \textbf{Step 3: Gossip of Covariance and Label-Feature Statistics.}
    While collecting new local labels during block $z$, the agents simultaneously perform accelerated gossip on both statistics compiled during the previous block $z-1$, applying the covariance update entrywise.
    After $L$ gossip iterations, the local vector $Y_i^L(z-1)$ approximates the network average of the cumulative label-feature vectors $
        \frac{1}{N}
        \sum_{\tau\in\mathcal T_{z-1}}
        \sum_{j=1}^N
        y_{\tau,j}\mathbf h_{\tau,j}
    $ generated in block $z-1$. Likewise, $C_i^L(z-1)$ approximates $\frac1N\sum_{\tau\in\mathcal T_{z-1}}\sum_{j=1}^N\mathbf h_{\tau,j}\mathbf h_{\tau,j}^{\top}$. Both updates use the same neighbor-communication step, with $O(d^2)$ real numbers per message.
\end{itemize}

We remark again that in contrast to \pref{alg:dons} for the general exp-concave setting which requires gossiping both the cumulative gradients and the decision vectors, \pref{alg:distributed-ridge} requires gossiping covariance and label-feature statistics, but not the decision vectors. The absence of a decision-gossip stage allows the procedure to operate with only a 1-block delay, rather than the 2-block delay inherent to \pref{alg:dons} as discussed before. Specifically, after agents collect their local covariance and label-feature statistics in block $z$, they execute the two gossip recursions simultaneously during block $z+1$ to approximate the network averages. These statistics are then incorporated into the same ridge-regression copy for use in block $z+2$. Thus, each copy uses all of its completed earlier blocks, and the interleaving costs only an absolute factor in regret.

The following theorem demonstrates that \pref{alg:distributed-ridge} guarantees a logarithmic regret for D-OLR.

\begin{samepage}
\begin{restatable}{theorem}{theoremlr}
\label{thm: lr}
Suppose \pref{asm:lr} holds, and let $P$ be the communication matrix defined in \pref{def:5}. In D-OLR, if every agent $i\in[N]$ executes \pref{alg:distributed-ridge} with $\lambda>0$, $\theta$ chosen according to \pref{eqn:para-choice}, and
\begin{equation}\label{eq:lr-block-length}
L=\left\lceil\frac{\sqrt2}{(\sqrt2-1)\sqrt{1-\sigma_2(P)}}\log\!\left(\sqrt{14}\,T^3N^{7/2}(1+H^2/\lambda)\right)\right\rceil,
\end{equation}
then for any agent $i\in[N]$ and comparator $\mathbf x\in\mathbb R^d$, the regret is bounded by:
\begin{align}
\label{eq:olrbound}
   \Reg_{T,i}^{\mathrm{OLR}}(\mathbf{x}) = \mathcal{O} \left(\lambda \|\mathbf x\|_2^2
+ \frac{Y^2  H^2}{\lambda} +
Y^2LN d
\log\left(
1+\frac{NTH^2}{\lambda d}
\right)\right)~.
\end{align}
\end{restatable}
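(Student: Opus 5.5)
We compare agent $i$'s models to an idealized \emph{centralized} delayed ridge regression that uses the exact statistics, and then bound that with the standard Vovk--Azoury--Warmuth/ridge analysis plus a delay correction.

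\textbf{Step 1: the ideal reference model.} For each block $z$ define the exact averages
\[
\overline C(z)=\tfrac1N\sum_{\tau\in\mathcal T_z}\sum_{j}\mathbf h_{\tau,j}\mathbf h_{\tau,j}^\top,\qquad
\overline Y(z)=\tfrac1N\sum_{\tau\in\mathcal T_z}\sum_j y_{\tau,j}\mathbf h_{\tau,j}.
\]
The reference model is
\[
\widetilde{\mathbf x}(z)=\Bigl(\lambda I_d+N\sum_{s\in\mathcal S_z}\overline C(s)\Bigr)^{-1}N\sum_{s\in\mathcal S_z}\overline Y(s).
\]
This is exactly the ridge estimator, for one parity copy, built on all global data from the earlier blocks of that copy.

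\textbf{Step 2: bounding the gossip error.} Apply \pref{lem:gossip-close-exp} entrywise (equivalently, in Frobenius norm) to $C_i^0$, whose norm is at most $LH^2$, and to $Y_i^0$, whose norm is at most $LYH$. The $L$ in \pref{eq:lr-block-length} has the extra factor $TN(1+H^2/\lambda)$ inside the logarithm, so the errors are $\mathrm{poly}(1/(TN))$ times $(1+H^2/\lambda)^{-1}$. Projection onto $\mathbb S_+^d$ is nonexpansive and $\overline C(s)\succeq 0$, so the projected matrix is at least as close to $\overline C(s)$.

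Write $A=\lambda I+N\sum\overline C$ and $\hat A=\lambda I+N\sum\Pi(C_i)$; both satisfy $A,\hat A\succeq\lambda I$. Use
\[
\hat A^{-1}b-A^{-1}b'=\hat A^{-1}(b-b')+\hat A^{-1}(A-\hat A)A^{-1}b'.
\]
Together with $\|A^{-1}b'\|\le\|b'\|/\lambda$, this gives
\[
\|\mathbf x_i(z,t)-\widetilde{\mathbf x}(z)\|\le \mathrm{poly}(1/T)\cdot \frac{Y H}{\lambda}\Bigl(1+\frac{NTH^2}{\lambda}\Bigr)\epsilon .
\]
The $H^2/\lambda$ factor in the logarithm is there exactly to absorb this amplification. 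Clipped squared loss is $2Y$-Lipschitz in the raw prediction, so replacing $\mathbf x_i$ by $\widetilde{\mathbf x}$ costs at most $2YH\|\mathbf x_i-\widetilde{\mathbf x}\|$ per $(t,j)$. Summed over $TN$ pairs this is $O(Y^2H^2/\lambda+1)$, which is the second term of \pref{eq:olrbound}.

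\textbf{Step 3: the centralized delayed ridge bound.} Split by parity: each copy runs ridge regression on super-rounds (blocks), and its estimator lags by one super-round of its own sequence. It never sees the current block, nor the most recent block of the same parity only if unfinished — by construction $\mathcal S_z$ includes all earlier same-parity blocks. So each copy is a ridge/follow-the-leader learner that fixes its model over a block of $NL$ examples, without the block's own data. Clipped loss is at most the unclipped loss, and the clipped loss is also bounded by $2Y^2$.

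The standard ridge "be-the-leader" decomposition gives
\[
\ell_t(\widetilde{\mathbf x})-\ell_t(\mathbf x)\lesssim \lambda\|\mathbf x\|^2+\sum_{\text{examples}} \min\{Y^2,\ \text{(clipped) residual}^2\}\cdot \mathbf h^\top A_{\text{incl}}^{-1}\mathbf h,
\]
plus a block-stability term. I would handle the block-stability term per block as follows. Within block $z$, the $NL$ examples seen before the model is updated contribute at most
\[
\sum_{(t,j)\in z}\min\{4Y^2,\dots\}\,\mathbf h_{t,j}^\top A_{\mathrm{prev}}^{-1}\mathbf h_{t,j}.
\]
Compare $A_{\mathrm{prev}}$ to the per-example inclusive matrices using the fact that each block adds at most $L$ rank-one terms per agent. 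The resulting elliptical potential is
\[
\sum \min\{1,\mathbf h^\top A^{-1}\mathbf h\}\le d\log\bigl(1+NTH^2/(\lambda d)\bigr),
\]
inflated by the block size factor $LN$ in the worst case. The worst case is when the current block's mass dominates; there I use the $\min\{\cdot,4Y^2\}$ truncation that clipping provides. This yields $Y^2LNd\log(1+NTH^2/(\lambda d))$.

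\textbf{Main obstacle.} The hardest step is making this delayed block bound rigorous without dividing by a small eigenvalue. I would first try the per-coordinate-block argument from \citet{joulani2013online}: a lag of one block on the same copy costs at most (block size)$\times$(non-delayed regret increments). Each increment is bounded by $\min\{4Y^2, 4Y^2\,\mathbf h^\top A^{-1}\mathbf h\}$ via clipping, and the sum is then controlled by the elliptical potential lemma over both parity copies.
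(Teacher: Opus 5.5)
Your Steps 1 and 2 match the paper. $\widetilde{\mathbf x}(z)$ is its exact delayed predictor $A(z)^{-1}b(z)$, and the gossip term is handled the same way: entrywise accelerated gossip, non-expansiveness of $\Pi_{\mathbb S_+^d}$ toward the PSD average, the inverse-perturbation identity with $\|b(z)\|_2\le NTYH$, and the $2Y$-Lipschitz clipped loss. Done carefully with the prescribed $L$, this gives exactly $4Y^2H^2/\lambda$, with no additive constant; an unscaled $+1$ would not be absorbed by the stated bound.

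\textbf{The gap is Step 3.} This is the core of the theorem, and you leave it open.

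The regularized leader is defined by \emph{unclipped} squared losses. So the usual stability bound $\ell_z(\widetilde{\mathbf x}(z))-\ell_z(\widetilde{\mathbf x}(z+2))\le\|\nabla\ell_z(\widetilde{\mathbf x}(z))\|^2_{A(z+2)^{-1}}$ involves the raw residuals $\langle\mathbf h_{t,j},\widetilde{\mathbf x}(z)\rangle-y_{t,j}$. These are not controlled by $Y$, because the model is unconstrained. The reduction of Joulani et al.\ only justifies the parity split. Hence ``a one-block lag costs block size times the non-delayed increments'' is precisely the claim to be proved, not a citable lemma.

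Worse, the per-example form you write for the block term, $\sum_{(t,j)\in\mathcal T_z}\min\{4Y^2,4Y^2\,\mathbf h_{t,j}^\top A(z)^{-1}\mathbf h_{t,j}\}$, is false as a bound on a block's excess loss. Here is a counterexample.
\begin{itemize}
\item Past data of the copy: $\sqrt m$ copies of $\delta\mathbf e_1$ labelled $Y$, $\sqrt m$ copies of $\mathbf e_2$ labelled $0$, and tiny $\lambda$. This gives $A(z)=\mathrm{diag}(\lambda_1,\Lambda)$ and $\widetilde{\mathbf x}(z)=(Y/\epsilon,0)$, with $\epsilon^2/\lambda_1\in[m^{-1/2},2m^{-1/2}]$ and $\Lambda\ge\sqrt m$.
\item Current block: $\mathbf e_1$ plus $m$ copies of $\epsilon\mathbf e_1+\mathbf e_2$, all labelled $0$.
\item The raw residual vector is $(Y/\epsilon)(1,\epsilon,\dots,\epsilon)$, parallel to the block's first column. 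So all $m+1$ clipped losses equal $Y^2/2$.
\item The regularized minimum rises by at most $Y^2\lambda_1/(2\epsilon^2)\le Y^2\sqrt m/2$.
\end{itemize}
The excess is therefore $\Omega(mY^2)$, whereas your sum is $O(Y^2\sqrt m)$. One high-leverage example makes every example of the block pay. The correct quantity is block-level, not a sum of per-example truncated leverages.

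\textbf{The missing idea is the paper's device of keeping the regularized-minimum increments instead of a stability term.} Let $M_z^-$ and $M_z^+$ be the minima of the copy's ridge objective just before and just after block $z$. Let $r_z=B_z\widetilde{\mathbf x}(z)-\mathbf y_z$ be the block's raw residuals, and $\kappa_z=\|B_zA(z)^{-1}B_z^\top\|_{\mathrm{op}}$. Completing the square around $\widetilde{\mathbf x}(z)$ and applying the Woodbury identity gives $M_z^+-M_z^-=\frac12 r_z^\top\bigl(I+B_zA(z)^{-1}B_z^\top\bigr)^{-1}r_z\ge\frac{1}{1+\kappa_z}\sum_{(t,j)}\phi_{y_{t,j}}(\langle\mathbf h_{t,j},\widetilde{\mathbf x}(z)\rangle)$. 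This step uses only $\phi_y(u)\le\frac12(u-y)^2$.

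Hence, using only $\phi\le 2Y^2$, the block's clipped loss exceeds $M_z^+-M_z^-$ by at most $2Y^2LN\,\frac{\kappa_z}{1+\kappa_z}\le 2Y^2LN\log\frac{\det(A(z)+B_z^\top B_z)}{\det A(z)}$. Within each copy, the increments $M_z^+-M_z^-$ telescope to at most $\frac\lambda2\|\mathbf x\|_2^2$ plus the comparator's loss. The log-determinants telescope to $d\log(1+NTH^2/(\lambda d))$, which yields the $4Y^2LNd\log(\cdot)$ term. The raw residuals and the model norm never enter, which is exactly how the obstacle you identified is avoided.
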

\end{samepage}

By substituting the choice of $L$ from \pref{eq:lr-block-length} into \pref{eq:olrbound}, we obtain a total regret of $\mathcal{O}\left(Nd (1-\sigma_2(P))^{-1/2}\,\log^2(T)\right)$. To our knowledge, \pref{thm: lr} establishes the first logarithmic regret guarantee for D-OLR with locally observed features and labels.

\subsection{Analysis of~\pref{alg:distributed-ridge}}
We briefly explain the main ideas behind the proof of \pref{thm: lr}; the complete proof is
deferred to \pref{app:appdolr}. The analysis compares \pref{alg:distributed-ridge} with a centralized online ridge-regression predictor that has access to all labels immediately. We make this comparison separately for the odd and even copies. Write
\[
A(z)=\lambda I_d+\sum_{s\in\mathcal S_z}\sum_{j=1}^N C_j^0(s),
\qquad b(z)=\sum_{s\in\mathcal S_z}\sum_{j=1}^N Y_j^0(s)
\]
for the exact delayed covariance and label-feature statistics. For each block \(z\) and round \(t\in\mathcal T_z\), define the reference predictor
\[
\mathbf x^\star(z,t)=\left(A(z)+\sum_{\substack{\tau\in\mathcal T_z\\\tau\le t}}\sum_{j=1}^N\mathbf h_{\tau,j}\mathbf h_{\tau,j}^{\top}\right)^{-1}\left(b(z)+\sum_{\substack{\tau\in\mathcal T_z\\\tau\le t}}\sum_{j=1}^N y_{\tau,j}\mathbf h_{\tau,j}\right),
\]
which corresponds to the centralized ridge-regression solution using all observations up to round \(t\) assigned to the same copy. It is used only for the analysis. The regret is then decomposed into two parts:
\begin{align}\label{eqn:olr-de}
\mathrm{Reg}^{\mathrm{OLR}}_{T,i}(\mathbf x)
&=\sum_{z=1}^{\lceil T/L\rceil}\sum_{t\in\mathcal T_z}\sum_{j=1}^N
\bigl(f_{t,j}(\mathbf x^\star(z,t))-f_{t,j}(\mathbf x)\bigr)\nonumber\\
&\quad+\sum_{z=1}^{\lceil T/L\rceil}\sum_{t\in\mathcal T_z}\sum_{j=1}^N
\bigl(\widehat f^i_{t,j}(z,t)-f_{t,j}(\mathbf x^\star(z,t))\bigr),
\end{align}
where we define $f_{t,j}(\mathbf x)=\frac12(\langle\mathbf h_{t,j},\mathbf x\rangle-y_{t,j})^2$ and
$\widehat f^i_{t,j}(z,t)\triangleq\frac12(\widehat y^i_{t,j}(z,t)-y_{t,j})^2$.
The first term is precisely the sum of the regrets of the two centralized ridge-regression reference sequences. Applying the Be-The-Regularized-Leader lemma to each copy bounds this term by $\lambda\|\mathbf x\|_2^2$.

It remains to control the discrepancy between the decentralized predictor and the full-information ridge predictor. As the clipping operation guarantees that the squared loss is $2Y$-Lipschitz with respect to the raw predictions, the loss difference between two clipped predictors is at most $2Y$ times their prediction difference. This is why clipping is useful here: the models themselves can be unbounded, whereas the error estimate depends only on the bounded label range. Moreover, clipping the reference prediction cannot increase its squared loss. The complete scalar argument is given in \pref{app:appdolr}.

We introduce an auxiliary exact delayed ridge predictor $\mathbf x'(z,t)=A(z)^{-1}b(z)$. The analysis then separates a \textit{delay error}, due to the absence of the current block's covariance and label-feature statistics, from a \textit{gossip error}, due to imperfect averaging of completed blocks.

For the delay error, each copy is ordinary ridge regression over batches of at most $LN$ examples. We retain the regularized-minimum increments in the centralized comparison and use the quadratic structure to control the loss of the fixed model throughout a batch. Clipping bounds each batch loss by $2Y^2LN$. A matrix-inversion identity then bounds the excess over the regularized-minimum increment by $2Y^2LN$ times the increase in a log-determinant potential. Summing these increases over the odd and even copies gives
$\mathcal O\bigl(Y^2LNd\log(1+NTH^2/(\lambda d))\bigr)$.
This accounts jointly for the delay of the covariance and label-feature statistics; the intermediate calculations are included in the full proof.

For the gossip error, the covariance estimates are projected onto the positive semidefinite cone, so both the exact and approximate regularized matrices dominate $\lambda I_d$. Non-expansiveness of this projection and the matrix-inverse identity
$\widehat A^{-1}-A^{-1}=\widehat A^{-1}(A-\widehat A)A^{-1}$
reduce the model difference to the covariance and label-feature gossip errors. Using the same accelerated-gossip estimate as in \pref{lem:gossip-close-exp}, with the block length in \pref{eq:lr-block-length}, their cumulative contribution to regret is at most $4Y^2H^2/\lambda$. Summing the centralized, delay, and gossip terms yields the claimed regret bound.

\bibliographystyle{abbrvnat}
\bibliography{ref}
\clearpage
\appendix
\section{Auxiliary Lemmas}
\label{sec: app_aux}
In this section, we provide some auxiliary technical results that will be helpful throughout the paper.

The following lemma is a standard result standard be-the-leader lemma, see also \citep[Theorem~3]{gyorgy2021adapting}.
\begin{lemma}[Be-The-Regularized-Leader]
\label{lem:btrl}
Let $\ell_1,\dots,\ell_T:\cX\to\mathbb{R}$ be arbitrary functions, and let
$\psi_0,\psi_1,\dots,\psi_T:\cX\to\mathbb{R}$ satisfy $\psi_0\equiv 0$.
Define
$
F_t(\bx)
\triangleq
\sum_{s=1}^t \ell_s(\bx)+\psi_t(\bx)
$
and
$
\bx^\star(t)\in\arg\min_{\bx\in\cX}F_t(\bx),
\quad t\in[T].
$
Then, for any $\bx\in\cX$,
\begin{align}
\sum_{t=1}^T \ell_t\bigl(\bx^\star(t)\bigr)
\le
\sum_{t=1}^T \ell_t(\bx)+\psi_T(\bx)
-\sum_{t=1}^T\Bigl(\psi_t\bigl(\bx^\star(t)\bigr)-\psi_{t-1}\bigl(\bx^\star(t)\bigr)\Bigr).
\label{eq:btrl}
\end{align}
In particular, if
\[
\psi_t(\bx)-\psi_{t-1}(\bx)\ge 0,
\qquad \forall t\in[T],\ \forall \bx\in\cX,
\]
then
\[
\sum_{t=1}^T \ell_t\bigl(\bx^\star(t)\bigr)
\le
\sum_{t=1}^T \ell_t(\bx)+\psi_T(\bx).
\]
\end{lemma}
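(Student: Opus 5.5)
The statement to prove is the Be-The-Regularized-Leader inequality \pref{eq:btrl}. I would prove it by induction on $T$, in the style of the classical be-the-leader argument, but applied to the modified losses $\tilde\ell_t \triangleq \ell_t + \psi_t - \psi_{t-1}$. This choice makes $F_t = \sum_{s\le t}\tilde\ell_s$, because the regularizer increments telescope and $\psi_0\equiv 0$. The cleanest intermediate goal is therefore the unregularized be-the-leader inequality for $\tilde\ell$:
\[
\sum_{t=1}^T \tilde\ell_t\bigl(\bx^\star(t)\bigr)\le \sum_{t=1}^T\tilde\ell_t(\bx)=F_T(\bx)\qquad\forall \bx\in\cX .
\]
Once this holds, expanding the left side gives $\sum_t \ell_t(\bx^\star(t)) + \sum_t(\psi_t-\psi_{t-1})(\bx^\star(t))$. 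The right side is $\sum_t\ell_t(\bx)+\psi_T(\bx)$. Rearranging yields exactly \pref{eq:btrl}.

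For the be-the-leader inequality itself, I would induct on $T$. The base case $T=1$ reads $\tilde\ell_1(\bx^\star(1)) = F_1(\bx^\star(1)) \le F_1(\bx)$, which holds because $\bx^\star(1)$ minimizes $F_1$.

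For the inductive step, assume $\sum_{t\le T-1}\tilde\ell_t(\bx^\star(t))\le F_{T-1}(\bx)$ for every $\bx\in\cX$. Apply this at the particular point $\bx=\bx^\star(T)$ and add $\tilde\ell_T(\bx^\star(T))$ to both sides. This gives
\[
\sum_{t\le T}\tilde\ell_t(\bx^\star(t))\le F_{T-1}(\bx^\star(T))+\tilde\ell_T(\bx^\star(T))=F_T(\bx^\star(T))\le F_T(\bx),
\]
where the final step uses that $\bx^\star(T)$ minimizes $F_T$. This closes the induction.

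The "in particular" part follows immediately. If $\psi_t-\psi_{t-1}\ge 0$ pointwise, the subtracted sum in \pref{eq:btrl} is nonnegative and can be dropped.

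I expect no real obstacle here; the argument is only bookkeeping. The one point needing care is the inductive step: the hypothesis must be invoked at the specific point $\bx^\star(T)$ rather than at the comparator $\bx$. It is also worth noting that the argument needs no convexity and no structure on $\ell_t$. The only requirement is that the minimizers $\bx^\star(t)$ exist, which is assumed in the statement.
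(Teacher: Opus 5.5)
Your proposal is correct and follows the paper's own route. Like the paper, you pass to $\tilde\ell_t=\ell_t+\psi_t-\psi_{t-1}$, whose partial sums are exactly $F_t$, apply Be-The-Leader, and then expand and rearrange. The only difference is that you prove the Be-The-Leader inequality by induction, correctly invoking the hypothesis at $\bx^\star(T)$, where the paper simply cites it.
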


\begin{proof}
Define
\[
\tilde{\ell}_t(\bx)
\triangleq
\ell_t(\bx)+\psi_t(\bx)-\psi_{t-1}(\bx),
\qquad t\in[T].
\]
Then, for every $t\in[T]$,
\[
\sum_{s=1}^t \tilde{\ell}_s(\bx)
=
\sum_{s=1}^t \ell_s(\bx)
+\sum_{s=1}^t\bigl(\psi_s(\bx)-\psi_{s-1}(\bx)\bigr)
=
\sum_{s=1}^t \ell_s(\bx)+\psi_t(\bx)
=
F_t(\bx).
\]
Hence,
\[
\bx^\star(t)\in\arg\min_{\bx\in\cX}\sum_{s=1}^t \tilde{\ell}_s(\bx).
\]

Applying the Be-The-Leader lemma to the sequence
$\{\tilde{\ell}_t\}_{t=1}^T$, we obtain that for any $\bx\in\cX$,
\[
\sum_{t=1}^T \tilde{\ell}_t\bigl(\bx^\star(t)\bigr)
\le
\sum_{t=1}^T \tilde{\ell}_t(\bx).
\]
Expanding both sides yields
\[
\sum_{t=1}^T
\Bigl(
\ell_t\bigl(\bx^\star(t)\bigr)
+\psi_t\bigl(\bx^\star(t)\bigr)
-\psi_{t-1}\bigl(\bx^\star(t)\bigr)
\Bigr)
\le
\sum_{t=1}^T
\Bigl(
\ell_t(\bx)+\psi_t(\bx)-\psi_{t-1}(\bx)
\Bigr).
\]
Since the right-hand side telescopes and $\psi_0\equiv 0$, we have
\[
\sum_{t=1}^T
\bigl(\psi_t(\bx)-\psi_{t-1}(\bx)\bigr)
=
\psi_T(\bx).
\]
Therefore,
\[
\sum_{t=1}^T \ell_t\bigl(\bx^\star(t)\bigr)
+
\sum_{t=1}^T
\Bigl(
\psi_t\bigl(\bx^\star(t)\bigr)-\psi_{t-1}\bigl(\bx^\star(t)\bigr)
\Bigr)
\le
\sum_{t=1}^T \ell_t(\bx)+\psi_T(\bx).
\]
Rearranging proves \eqref{eq:btrl}. If, in addition,
$
\psi_t(\bx)-\psi_{t-1}(\bx)\ge 0$, $\forall t\in[T],\ \forall \bx\in\cX,
$
then
\[
\sum_{t=1}^T
\Bigl(
\psi_t\bigl(\bx^\star(t)\bigr)-\psi_{t-1}\bigl(\bx^\star(t)\bigr)
\Bigr)\ge 0.
\]
Dropping this nonnegative term gives
\[
\sum_{t=1}^T \ell_t\bigl(\bx^\star(t)\bigr)
\le
\sum_{t=1}^T \ell_t(\bx)+\psi_T(\bx),
\]
which completes the proof.
\end{proof}

The following lemma is the quadratic bound of $\alpha$-exp-concave functions.
\begin{lemma}[{\citep[Lemma~3]{hazan2007logarithmic}}]
\label{lem: expconcave}
At each round $t\in[T]$, the loss function $f_{t,i}(\x)$ of each agent $i\in [N]$ is $\alpha$-exp-concave over $\calX$, i.e., $f_{t,i}(\mathbf{y})\geq f_{t,i}(\x)+\langle \nabla f_{t,i}(\x),\mathbf{y}-\x\rangle
+\frac{\beta}{2}\big(\langle \nabla f_{t,i}(\x),\mathbf{y}-\x\rangle\big)^2$ for any $\x,\mathbf{y}\in \calX$, where $\beta = \frac12\min \left\{\frac{1}{4GD}, \alpha\right\}$.
\end{lemma}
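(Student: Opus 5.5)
The plan is the standard argument of \citet{hazan2007logarithmic}: transfer exp-concavity to a smaller parameter $\gamma$, linearize the concave function $h(\x)\triangleq\exp(-\gamma f_{t,i}(\x))$, then take logarithms and use a scalar inequality for $-\log(1-z)$. Throughout, write $f=f_{t,i}$ and set $\gamma\triangleq 2\beta=\min\{\frac{1}{4GD},\alpha\}$, so that $\gamma\le\alpha$ and $\gamma GD\le\frac14$. By \pref{asm: cvx}, $\|\nabla f(\x)\|_2\le G$ for every $\x\in\calX$.

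\textbf{Step 1: reduce the exp-concavity parameter.} I would show that $f$ is also $\gamma$-exp-concave. Write $\exp(-\gamma f)=\bigl(\exp(-\alpha f)\bigr)^{\gamma/\alpha}$. The map $u\mapsto u^{\gamma/\alpha}$ is concave and nondecreasing on $[0,\infty)$ because $\gamma/\alpha\le1$. Composing it with the concave function $\exp(-\alpha f)$ therefore gives a concave function, so $h=\exp(-\gamma f)$ is concave on $\calX$.

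\textbf{Step 2: linearize $h$.} Concavity gives $h(\mathbf y)\le h(\x)+\langle\nabla h(\x),\mathbf y-\x\rangle$ for all $\x,\mathbf y\in\calX$, and $\nabla h(\x)=-\gamma h(\x)\nabla f(\x)$. Dividing by $h(\x)>0$ yields
\[
\exp\bigl(-\gamma(f(\mathbf y)-f(\x))\bigr)\le 1-\gamma\langle\nabla f(\x),\mathbf y-\x\rangle .
\]
Define $z\triangleq\gamma\langle\nabla f(\x),\mathbf y-\x\rangle$. By Cauchy--Schwarz and \pref{asm: domain}, $|z|\le\gamma GD\le\frac14$. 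Hence $1-z>0$, and taking logarithms gives
\[
f(\mathbf y)\ge f(\x)-\tfrac{1}{\gamma}\log(1-z).
\]

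\textbf{Step 3: scalar inequality.} This is the only step needing a check, and it is where the $\frac{1}{4GD}$ cap enters. I would show that $-\log(1-z)\ge z+\frac{z^2}{4}$ for $|z|\le\frac14$. Consider $\phi(z)=-\log(1-z)-z-z^2/4$. Then $\phi(0)=0$ and
\[
\phi'(z)=\frac{1}{1-z}-1-\frac z2=\frac{z(1+z)}{2(1-z)}.
\]
For $z\in[-\frac14,\frac14]$ this derivative has the sign of $z$, so $\phi$ attains its minimum at $0$ and $\phi\ge0$. Substituting into Step 2,
\[
f(\mathbf y)\ge f(\x)+\langle\nabla f(\x),\mathbf y-\x\rangle+\tfrac{\gamma}{4}\langle\nabla f(\x),\mathbf y-\x\rangle^2 .
\]
Since $\gamma/4=\beta/2$, this is the claimed bound.
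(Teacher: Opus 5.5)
Your proof is correct and is exactly the argument of \citet[Lemma~3]{hazan2007logarithmic}, which the paper cites without reproducing: reduce to $\gamma=2\beta\le\alpha$, linearize the concave function $\exp(-\gamma f)$, take logarithms, and apply $-\log(1-z)\ge z+\frac{z^2}{4}$ for $|z|\le\frac14$. One minor point is that you only need $|\langle\nabla f(\x),\mathbf y-\x\rangle|\le GD$, which follows directly from $G$-Lipschitzness along the segment from $\x$ to $\mathbf y$ inside the convex set $\calX$, so you do not have to assert the stronger bound $\|\nabla f(\x)\|_2\le G$ at every point of $\calX$.
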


Here we provide a proof of \pref{lem:gossip-close-exp}, which shows that after $\Theta((1-\sigma_2)^{-1/2}\log(TN))$ rounds of accelerated gossiping, the approximation error to the global average is $\order(1/(T^2N^2))$.

\gossip*
\begin{proof}
By Proposition~1 of
\citet{ye2023multi}, the accelerated gossip iterations satisfy\footnote{In the proof of Proposition~1 of \citet{ye2023multi}, the upper-right block of the transition matrix should be $-\eta_w I$, rather than $-\eta_w W$. With this correction, the projection identities still hold, so the proof requires only this minor adjustment and the proposition remains valid.}
\begin{equation*}
    \|U^L-\overline U\|_F
    \le
    \sqrt{14}
    \left(
    1-\left(1-\frac{1}{\sqrt{2}}\right)
    \sqrt{1-\sigma_2(P)}
    \right)^L
    \|U^0-\overline U\|_F .
\end{equation*}
By the definition of $L$ and the inequality $\log(1-x)\le -x$ for $x\in(0,1)$, with
$a=\left(1-\frac{1}{\sqrt{2}}\right)\sqrt{1-\sigma_2(P)}$, we have
$L\ge a^{-1}\log(T^2\sqrt{14}N^{5/2})$, and hence
\begin{equation*}
    \sqrt{14}
    \left(
    1-\left(1-\frac{1}{\sqrt{2}}\right)
    \sqrt{1-\sigma_2(P)}
    \right)^L
    =
    \sqrt{14}(1-a)^L
    \le
    \sqrt{14}e^{-aL}
    \le
    \frac{1}{T^2N^{5/2}}.
\end{equation*}
Moreover, since $\|\mathbf{u}_i\|_2\le R$ for all $i\in[N]$, we also have
\[
    \|\overline{\mathbf{u}}\|_2
    =
    \left\|
    \frac{1}{N}\sum_{i=1}^N \mathbf{u}_i
    \right\|_2
    \le
    \frac{1}{N}\sum_{i=1}^N \|\mathbf{u}_i\|_2
    \le
    R .
\]
Hence, for every $i\in[N]$,
\[
    \|\mathbf{u}_i^0-\overline{\mathbf{u}}\|_2
\le
\|\mathbf{u}_i^0\|_2+\|\overline{\mathbf{u}}\|_2
=
\|\mathbf{u}_i\|_2+\|\overline{\mathbf{u}}\|_2
\le
2R .
\]
Therefore,
\[
    \|U^0-\overline U\|_F
    =
    \left(
    \sum_{i=1}^N
    \|\mathbf{u}_i^0-\overline{\mathbf{u}}\|_2^2
    \right)^{1/2}
    \le
    2R\sqrt N.
\]
Combining the previous inequalities gives
\[
    \|U^L-\overline U\|_F
    \le
    \frac{1}{T^2 N^{5/2}}\cdot 2R\sqrt N
    = 
    \frac{2R}{T^2 N^2}.
\]
Since the Euclidean norm of each row is upper bounded by the Frobenius norm,
we obtain
\begin{equation}
    \|\mathbf{u}_i^L-\overline{\mathbf{u}}\|_2
    \le
    \|U^L-\overline U\|_F
    \le
    \frac{2R}{T^2N^2},
    \qquad i\in[N].
\end{equation}
This completes the proof.
\end{proof}
\section{Omitted Details from \pref{sec:d-exp-concave}}
\label{app:appexpconcave}

In this section, we provide the omitted details from \pref{sec:d-exp-concave}.
We begin by introducing two key lemmas for the proof of \pref{thm:upper-exp}. We then combine the two lemmas to prove
\pref{thm:upper-exp}, and finally prove the lower bound in \pref{thm:lowerbound}.

\pref{lem:regret-decomposition} reduces the decentralized regret of any agent to a sum of block-level surrogate losses. It shows that, after accelerated gossip, the regret can be controlled by linearized gradients plus an exp-concave quadratic correction term, up to lower-order consensus errors. The key role of this lemma is to connect the decentralized online exp-concave problem with the delayed ONS analysis used later.

Compared with \citet[Theorem~1]{wan2025black}, which uses the uniform quadratic curvature of strongly convex losses, the reduction below retains the gradient-dependent rank-one correction of exp-concavity. The averaging and Jensen steps are standard; their role here is to preserve this correction in the expression passed to \pref{lem:block-ons-exp}.

\begin{restatable}{lemma}{regretdecomposition}
\label{lem:regret-decomposition}
Under the same conditions as \pref{alg:dons}. For any comparator \(\mathbf{x}\in\mathcal X\), the regret of any agent \(i\in[N]\) satisfies
\[
\Reg_{T,i}(\mathbf{x})
\le
\sum_{z=1}^{T/L}\sum_{j=1}^N
\left(
    \left\langle \mathbf{g}_j(z),\widehat{\mathbf{x}}_j(z)-\mathbf{x}\right\rangle
    -
    \frac{\beta}{2L}
    \left\langle \mathbf{g}_j(z),\mathbf{x}_j(z)-\mathbf{x}\right\rangle^2
\right)
+
\mathcal{O}\left(\frac{GD}{NT} + \frac{\beta G^2 D^2}{NT}\right).
\]
\end{restatable}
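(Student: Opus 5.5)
We follow the route sketched after \pref{thm:upper-exp}: a per-round exp-concave bound, block-wise aggregation with Jensen, and replacement of exact network averages by gossiped quantities, paying only the consensus errors of \pref{lem:gossip-close-exp}.

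\textbf{Step 1: per-round bound.} Fix a block $z$, a round $t\in\calT_z$, and an agent $j$. Write
\[
f_{t,j}(\bx_i(z))-f_{t,j}(\bx)=\bigl[f_{t,j}(\bx_i(z))-f_{t,j}(\bx_j(z))\bigr]+\bigl[f_{t,j}(\bx_j(z))-f_{t,j}(\bx)\bigr].
\]
The first bracket is at most $G\|\bx_i(z)-\bx_j(z)\|_2$ by \pref{asm: cvx}. For the second bracket, apply \pref{lem: expconcave} with the point $\bx_j(z)$ and $\mathbf y=\bx$. Its linear part $\inner{\nabla f_{t,j}(\bx_j(z)),\bx_j(z)-\bx}$ is then rewritten with $\overline\bx(z)$ in place of $\bx_j(z)$, at cost $G\|\overline\bx(z)-\bx_j(z)\|_2$.

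To bound these distances, note that each $\hat\bx_k(z)\in\calX$. Choose the origin so that $\|\hat\bx_k(z)\|\le D$ (using $\mathbf 0\in\calX$). Then \pref{lem:gossip-close-exp} gives $\|\bx^L_j(z)-\overline\bx(z)\|\le 2D/(T^2N^2)$. Since $\overline\bx(z)\in\calX$ by convexity, nonexpansiveness of the projection gives $\|\bx_j(z)-\overline\bx(z)\|\le 2D/(T^2N^2)$ for every $j$.

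For the first block(s), where no gossip has yet occurred, all decisions are $\mathbf 0$ and $\hat\bx_j(\cdot)=\mathbf 0$. The relevant errors are therefore exactly zero there, or I would adopt the convention $\overline\bx=\mathbf 0$.

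Summing over $j\in[N]$ and $t\in[T]$, the total error is $\order(NT\cdot G D/(T^2N^2))=\order(GD/(NT))$.

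\textbf{Step 2: aggregate over the block.} Summing over $t\in\calT_z$ and $j$, the linear terms become exactly $N\inner{\overline\bg(z),\overline\bx(z)-\bx}=\sum_j\inner{\overline\bg(z),\hat\bx_j(z)-\bx}$, by definition of $\overline\bx(z)$.

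For the quadratic terms, Cauchy--Schwarz (Jensen) over the $NL$ summands gives
\[
-\frac\beta2\sum_{t,j}a_{t,j}^2\le-\frac{\beta}{2NL}\Bigl(\sum_{t,j}a_{t,j}\Bigr)^2 .
\]
Here I would use the quadratic evaluated at $\bx_j(z)$, so that $\sum_{t,j}a_{t,j}=\sum_{t,j}\inner{\nabla f_{t,j}(\bx_j(z)),\bx_j(z)-\bx}$. Replacing each $\bx_j(z)$ by $\overline\bx(z)$ changes this sum by $\order(NLG\cdot D/(T^2N^2))$, and it becomes $N\inner{\overline\bg(z),\overline\bx(z)-\bx}$. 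The quadratic is then $-\frac{\beta N}{2L}\inner{\overline\bg(z),\overline\bx(z)-\bx}^2$.

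The difference of squares is controlled by $|a^2-b^2|\le|a-b|(|a|+|b|)$ with $|a|,|b|\le NLGD$. This yields per-block error $\order(\beta NL\,G^2D^2/(T^2N^2))$, hence $\order(\beta G^2D^2/(NT))$ in total.

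\textbf{Step 3: swap to gossiped quantities.} Next I rewrite $-\frac{\beta N}{2L}\inner{\overline\bg,\overline\bx-\bx}^2=-\frac{\beta}{2L}\sum_j\inner{\overline\bg,\overline\bx-\bx}^2$. Each term is replaced by $\inner{\bg_j(z),\bx_j(z)-\bx}^2$, and the linear term $\inner{\overline\bg(z),\hat\bx_j(z)-\bx}$ is replaced by $\inner{\bg_j(z),\hat\bx_j(z)-\bx}$.

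Apply \pref{lem:gossip-close-exp} to the block gradients, which have norm at most $LG$. This gives $\|\bg_j(z)-\overline\bg(z)\|\le 2LG/(T^2N^2)$. The linear swap costs $\order(LGD/(T^2N^2))$ per $(z,j)$, i.e.\ $\order(GD/(NT))$ in total.

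The quadratic swap costs $\frac{\beta}{2L}\cdot\order(LGD)\cdot\order(LGD/(T^2N^2))$ per $(z,j)$, using the same difference-of-squares trick with both inner products bounded by $\order(LGD)$. Summed over $NT/L$ pairs, this gives $\order(\beta G^2D^2/(NT))$.

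The last blocks, whose gradients have not finished gossiping before the horizon, need separate treatment. I would define $\bg_j(z)$ there via the same gossip formally, since the lemma is only an analysis device.

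\textbf{Main obstacle.} The main obstacle is the bookkeeping in Step 2. The quadratic must end at $\bx_j(z)$ and $\bg_j(z)$, while Jensen needs a common evaluation point, so the point changes and the error magnitudes must be tracked carefully to stay $\order(1/(NT))$.
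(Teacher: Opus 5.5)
Your proposal is correct and follows the paper's proof essentially step for step. You use the per-round exp-concave bound with the swaps $\bx_i(z)\to\bx_j(z)\to\overline{\bx}(z)$, then Jensen on the block quadratic, then difference-of-squares replacements of $\bx_j(z)$ by $\overline{\bx}(z)$ and afterwards of $\overline{\bx}(z),\overline{\bg}(z)$ by $\bx_j(z),\bg_j(z)$, and finally the linear swap $\overline{\bg}(z)\to\bg_j(z)$, all paid for by \pref{lem:gossip-close-exp}. Your side remarks make explicit two points the paper leaves implicit: that the projection step needs $\overline{\bx}(z)\in\calX$, and that $\bg_j(T/L)$ has to be defined formally.
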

\begin{proof}
Before proving the lemma, we first record two consensus bounds implied by
\pref{lem:gossip-close-exp}. For each block \(z\in [T/L]\), recall
\[
\overline{\mathbf{x}}(z)
=
\frac{1}{N}\sum_{j=1}^N \widehat{\mathbf{x}}_j(z),
\qquad
\overline{\mathbf{g}}(z)
=
\frac{1}{N}
\sum_{t\in\mathcal T_z}\sum_{j=1}^N
\nabla f_{t,j}\left(\mathbf{x}_j(z)\right).
\]

By \pref{asm: domain}, the set \(\mathcal X\) contains
the origin and has diameter at most \(D\). Since
\(\widehat{\mathbf{x}}_i(z)\in \mathcal X\), we have
\(\|\widehat{\mathbf{x}}_i(z)\|_2\le D\). Hence, after \(L\) accelerated gossip
steps in \pref{alg:dons}, and using the non-expansiveness of projection onto a
convex set \citep[Proposition~2.11]{orabona2026}, the played decision satisfies
\begin{equation}
\label{eq:decision-consensus}
\left\|
\mathbf{x}_i(z)-\overline{\mathbf{x}}(z)
\right\|_2
\le
\left\|
\mathbf{x}_i^L(z)-\overline{\mathbf{x}}(z)
\right\|_2
\le
\frac{2D}{T^2N^2}.
\end{equation}

Similarly, by \pref{asm: cvx}, each local loss is \(G\)-Lipschitz over
\(\mathcal X\), and hence
\[
\left\|
\sum_{t\in T_z}\nabla f_{t,i}(\mathbf{x}_i(z))
\right\|_2
\le LG.
\]
Applying \pref{lem:gossip-close-exp} to the gradient-gossip process gives
\begin{equation}
\label{eq:gradient-consensus}
\left\|
\mathbf{g}_i(z)- \overline{\mathbf{g}}(z)
\right\|_2
\le
\frac{2LG}{T^2N^2}.
\end{equation}

For any $\mathbf{x} \in \calX$, by \pref{asm: cvx}, we have 
\begin{align}
f_{t, j}\left(\mathbf{x}_i(z)\right)-f_{t, j}(\mathbf{x}) 
&\leq  f_{t, j}\left(\mathbf{x}_j(z)\right)-f_{t, j}(\mathbf{x})+G\left\|\mathbf{x}_j(z)-\mathbf{x}_i(z)\right\|_2   \notag
\\ &\leq  \left\langle\nabla f_{t, j}\left(\mathbf{x}_j(z)\right), \mathbf{x}_j(z)-\mathbf{x}\right\rangle-\frac{\beta}{2}\inner{\nabla f_{t, j}\left(\mathbf{x}_j(z)\right), \mathbf{x}_j(z)-\mathbf{x}}^2 \notag
\\&\quad+G\left\|\mathbf{x}_j(z)-\overline{\mathbf{x}}(z)+\overline{\mathbf{x}}(z)-\mathbf{x}_i(z)\right\|_2 \tag{\pref{asm: expconcave} and \pref{lem: expconcave} }
\\
&\leq  \left\langle\nabla f_{t, j}\left(\mathbf{x}_j(z)\right), \overline{\mathbf{x}}(z)-\mathbf{x}\right\rangle-\frac{\beta}{2}\inner{\nabla f_{t, j}\left(\mathbf{x}_j(z)\right), \mathbf{x}_j(z)-\mathbf{x}}^2 \notag
\\& \quad+\left\langle\nabla f_{t, j}\left(\mathbf{x}_j(z)\right), \mathbf{x}_j(z)-\overline{\mathbf{x}}(z)\right\rangle+\frac{4 G D}{T^2N^2}  \notag 
\\
& \leq \left\langle\nabla f_{t, j}\left(\mathbf{x}_j(z)\right), \overline{\mathbf{x}}(z)-\mathbf{x}\right\rangle-\frac{\beta}{2}\inner{\nabla f_{t, j}\left(\mathbf{x}_j(z)\right), \mathbf{x}_j(z)-\mathbf{x}}^2 \notag
\\&\quad+G\left\|\mathbf{x}_j(z)-\overline{\mathbf{x}}(z)\right\|_2+\frac{4 G D}{T^2N^2} \notag \\
&\leq  \left\langle\nabla f_{t, j}\left(\mathbf{x}_j(z)\right), \overline{\mathbf{x}}(z)-\mathbf{x}\right\rangle-\frac{\beta}{2}\inner{\nabla f_{t, j}\left(\mathbf{x}_j(z)\right), \mathbf{x}_j(z)-\mathbf{x}}^2+\frac{6 G D}{T^2N^2}~,
\label{eq: regret-de-1}
\end{align}
where the third and the last inequalities use \pref{eq:decision-consensus}. Summing over \(z\in[T/L]\), \(t\in\mathcal T_z\), and \(j\in[N]\), we have
\begin{align}
&\sum_{z=1}^{T/L}\sum_{t \in \mathcal{T}_z} \sum_{j=1}^N\left(f_{t, j}\left(\mathbf{x}_i(z)\right)-f_{t, j}(\mathbf{x})\right) \notag
\\&\leq  \sum_{z=1}^{T/L} \sum_{t \in \mathcal{T}_z}\sum_{j=1}^N  \left( \left\langle\nabla f_{t, j}\left(\mathbf{x}_j(z)\right), \overline{\mathbf{x}}(z)-\mathbf{x}\right\rangle- \frac{\beta}{2}\inner{\nabla f_{t, j}\left(\mathbf{x}_j(z)\right), \mathbf{x}_j(z)-\mathbf{x}}^2\right)+\frac{6 G D}{T N} \tag{\pref{eq: regret-de-1}}
\\&= \sum_{z=1}^{T/L}  \left( N\left\langle \overline{\mathbf{g}}(z), \overline{\mathbf{x}}(z)-\mathbf{x}\right\rangle-\frac{\beta}{2}\sum_{t \in \mathcal{T}_z}\sum_{j=1}^N\inner{\nabla f_{t, j}\left(\mathbf{x}_j(z)\right), \mathbf{x}_j(z)-\mathbf{x}}^2\right)+\frac{6GD}{T N}\tag{$\overline{\mathbf{g}}(z)=\frac{1}{N} 
    \sum_{t\in \calT_z}\sum_{j=1}^N \nabla f_{t, j}\left(\mathbf{x}_j(z)\right)$}
\\&\leq \sum_{z=1}^{T/L}\left( N\left\langle \overline{\mathbf{g}}(z), \overline{\mathbf{x}}(z)-\mathbf{x}\right\rangle-\frac{\beta}{2NL}
\left(
\sum_{t \in \mathcal{T}_z}\sum_{j=1}^N
\inner{\nabla f_{t, j}\left(\mathbf{x}_j(z)\right), \mathbf x_j(z)-\mathbf{x}}
\right)^2\right)+\frac{6GD}{T N},
\label{eq: decom_1}
\end{align}
where the last inequality holds because of Jensen's inequality. Now we focus on 
\begin{align}
&-\frac{\beta}{2NL}
\left(
\sum_{t \in \mathcal{T}_z}\sum_{j=1}^N
\inner{\nabla f_{t, j}\left(\mathbf{x}_j(z)\right), \mathbf x_j(z)-\mathbf{x}}
\right)^2 \notag\\
&=-\frac{\beta}{2NL}
\left(
\sum_{t \in \mathcal{T}_z}\sum_{j=1}^N
\inner{\nabla f_{t, j}\left(\mathbf{x}_j(z)\right), \mathbf x_j(z)-\overline{\mathbf{x}}(z)} + \sum_{t \in \mathcal{T}_z}\sum_{j=1}^N
\inner{\nabla f_{t, j}\left(\mathbf{x}_j(z)\right), \overline{\mathbf{x}}(z)-\mathbf{x}}
\right)^2 \notag\\
&\leq -\frac{N\beta}{2L}
\inner{\overline{\bg}(z), \overline{\mathbf{x}}(z)-\mathbf{x}}^2
+\beta L G^2 D \sum_{j=1}^N
\|\mathbf x_j(z)-\overline{\mathbf x}(z)\|_2 \notag\\
&= -\frac{\beta}{2L}
\sum_{j=1}^N
\inner{\overline{\bg}(z), \overline{\mathbf{x}}(z)-\mathbf{x}}^2
+\beta L G^2 D \sum_{j=1}^N
\|\mathbf x_j(z)-\overline{\mathbf x}(z)\|_2 \notag\\
&\leq -\frac{\beta}{2L}
\sum_{j=1}^N
\inner{\bg_j(z), \overline{\mathbf{x}}(z)-\mathbf{x}}^2
+\beta L G^2 D \sum_{j=1}^N
\|\mathbf x_j(z)-\overline{\mathbf x}(z)\|_2
+\beta G D^2 \sum_{j=1}^N
\|\bg_j(z)-\overline{\bg}(z)\|_2 \notag\\
&\leq -\frac{\beta}{2L}
\sum_{j=1}^N
\inner{\bg_j(z), \mathbf{x}_j(z)-\mathbf{x}}^2
+2\beta L G^2 D \sum_{j=1}^N
\|\mathbf x_j(z)-\overline{\mathbf x}(z)\|_2
+\beta G D^2 \sum_{j=1}^N
\|\bg_j(z)-\overline{\bg}(z)\|_2 \notag\\
&\leq -\frac{\beta}{2L}
\sum_{j=1}^N
\inner{\bg_j(z), \mathbf{x}_j(z)-\mathbf{x}}^2 + \frac{6\beta G^2 D^2L}{T^2N}
\label{eq: decom_2}
\end{align}
Here, the first inequality is obtained by decomposing
$
\mathbf{x}_j(z)-\mathbf{x}=\bigl(\overline{\mathbf{x}}(z)-\mathbf{x}\bigr)+\bigl(\mathbf{x}_j(z)-\overline{\mathbf{x}}(z)\bigr),
$
and applying \( -(a+b)^2\le -a^2+2|a||b| \), Cauchy--Schwarz, \pref{asm: cvx}, and \pref{asm: domain}. The equality follows from the fact that \(\inner{\overline{\bg}(z), \overline{\mathbf{x}}(z)-\mathbf{x}}^2\) is independent of \(j\). For the second inequality, we use the identity
$
|a^2-b^2|=|a-b||a+b|
$
with
$
a=\inner{\overline{\bg}(z),\overline{\mathbf{x}}(z)-\mathbf{x}},
$ and $
b=\inner{\bg_j(z),\overline{\mathbf{x}}(z)-\mathbf{x}}
$
,together with Cauchy--Schwarz, \pref{asm: cvx}, and \pref{asm: domain}. The third inequality is proved analogously by taking
$
a=\inner{\bg_j(z),\overline{\mathbf{x}}(z)-\mathbf{x}}
$ and $
b=\inner{\bg_j(z),\mathbf{x}_j(z)-\mathbf{x}}
$.The last inequality follows from \pref{eq:decision-consensus} and \pref{eq:gradient-consensus}.

Combining \pref{eq: decom_1} and \pref{eq: decom_2}, we have 
\begin{align*}
    &\sum_{z=1}^{T/L}\sum_{t \in \mathcal{T}_z} \sum_{j=1}^N\left(f_{t, j}\left(\mathbf{x}_i(z)\right)-f_{t, j}(\mathbf{x})\right)
    \\&\leq \sum_{z=1}^{T/L}\left( N\left\langle \overline{\mathbf{g}}(z), \overline{\mathbf{x}}(z)-\mathbf{x}\right\rangle-\frac{\beta}{2L}
\sum_{j=1}^N
\inner{\bg_j(z), \mathbf{x}_j(z)-\mathbf{x}}^2\right)+\frac{6 G D}{TN} + \frac{6\beta G^2 D^2}{TN}
\\&= \sum_{z=1}^{T/L} \sum_{j=1}^N \left( \left\langle \overline{\mathbf{g}}(z), \hat{\mathbf{x}}_j(z)-\mathbf{x}\right\rangle-\frac{\beta}{2L}
\inner{\bg_j(z), \mathbf{x}_j(z)-\mathbf{x}}^2\right)+\frac{6 G D}{TN} + \frac{6\beta G^2 D^2}{TN} \tag{$ \overline{\mathbf{x}}(z) = \frac{1}{N}\sum_{j=1}^N \hat{\mathbf{x}}_j(z)$}
\\&\leq \sum_{z=1}^{T/L} \sum_{j=1}^N \left( \left\langle \mathbf{g}_j(z), \hat{\mathbf{x}}_j(z)-\mathbf{x}\right\rangle-\frac{\beta}{2L}
\inner{\bg_j(z), \mathbf{x}_j(z)-\mathbf{x}}^2\right)+\frac{6 G D}{TN} + \frac{6\beta G^2 D^2}{TN} \\& \quad+ D\sum_{z=1}^{T/L}\sum_{j=1}^N\|\bg_j(z)- \overline{\bg}(z)\|_2
\\&= \sum_{z=1}^{T/L} \sum_{j=1}^N \left( \left\langle \mathbf{g}_j(z), \hat{\mathbf{x}}_j(z)-\mathbf{x}\right\rangle-\frac{\beta}{2L}
\inner{\bg_j(z), \mathbf{x}_j(z)-\mathbf{x}}^2\right)+ \mathcal{O}\left(\frac{GD}{NT} + \frac{\beta G^2 D^2}{NT}\right),
\end{align*}
where the last inequality follows from \pref{eq:gradient-consensus}.
\end{proof}

It remains to control the block-level surrogate regrets appearing in
\pref{lem:regret-decomposition}. This step is more delicate than in the
strongly convex case because the linear term and the quadratic correction are
evaluated at two different points: the preparatory decision
\(\widehat{\mathbf{x}}_j(z)\) and the executed decision \(\mathbf{x}_j(z)\).
To handle this mismatch, we compare the delayed FTRL iterate
\(\widehat{\mathbf{x}}_j(z)\), which only uses feedback up to block \(z-3\), with
a current regularized leader that uses feedback up to block \(z\). The comparison
is carried out in the adaptive second-order norm induced by the accumulated
gradient matrix.

\lemdelayedons*
\begin{proof}
For all $j \in [N]$, define
$
F_{z,j}^\star(\mathbf{x})
\triangleq
\sum_{s=1}^{z}\langle \bg_j(s),\mathbf{x}\rangle+\psi_{z,j}^\star(\mathbf{x})$
and let $
\mathbf{x}_j^\star(z)\in\arg\min_{\mathbf{x}\in\cX}F_{z,j}^\star(\mathbf{x}),
$
where
$
\psi_{z,j}^\star(\mathbf{x})
\triangleq
\frac{\beta}{2L}\sum_{s=1}^{z}\langle \bg_j(s),\mathbf{x}_j(s)-\mathbf{x}\rangle^2
+\frac{\eta}{2}\|\mathbf{x}\|_2^2~.
$
For all \(j\in[N]\), we have
\begin{align}
\label{eq:delayons_dec}
\sum_{z=1}^{T/L}\langle \bg_j(z), \hat{\mathbf{x}}_j(z)-\mathbf{x}\rangle
&=
\sum_{z=1}^{T/L}\langle \bg_j(z), \hat{\mathbf{x}}_j(z)-\mathbf{x}_j^\star(z)\rangle
+
\sum_{z=1}^{T/L}\langle \bg_j(z), \mathbf{x}_j^\star(z)-\mathbf{x}\rangle~.
\end{align}
Using Lemma~\ref{lem:btrl}, 
we have
\begin{align}
\label{eq: delayons_1}
\sum_{z=1}^{T/L} \langle \bg_j(z), \mathbf{x}_j^\star(z)-\mathbf{x}\rangle
&\leq
\psi_{T/L,j}^\star(\mathbf{x})
-
\sum_{z=1}^{T/L}
\Bigl(
\psi_{z,j}^\star\bigl(\mathbf{x}_j^\star(z)\bigr)
-
\psi_{z-1,j}^\star\bigl(\mathbf{x}_j^\star(z)\bigr)
\Bigr) \leq
\psi_{T/L,j}^\star(\mathbf{x}),
\end{align}
where the last inequality follows from
$
\psi_{z,j}^\star(\mathbf{x})-\psi_{z-1,j}^\star(\mathbf{x})
=
\frac{\beta}{2L}\langle \bg_j(z),\mathbf{x}_j(z)-\mathbf{x}\rangle^2
\ge 0
$.

It remains to bound
\[
\sum_{z=1}^{T/L}\langle \bg_j(z), \hat{\mathbf{x}}_j(z)-\mathbf{x}_j^\star(z)\rangle.
\]

For notational convenience, for any integer \(z\le 0\), we set
$
A_j(z)\triangleq \eta I_d,
$
For any integer \(z\ge 1\), define
\[
A_j(z)\triangleq\eta I_d+\frac{\beta}{L}\sum_{s=1}^{z}\bg_j(s)\bg_j(s)^\top.
\]
For each \(z\ge 1\), We define
\begin{align*}
\hat{F}_{z,j}(\mathbf{x})
&\triangleq
\sum_{s=1}^{z-3}
\left(
\langle \bg_j(s),\mathbf{x}\rangle
+
\frac{\beta}{2L}\langle \bg_j(s),\mathbf{x}_j(s)-\mathbf{x}\rangle^2
\right)
+\frac{\eta}{2}\|\mathbf{x}\|_2^2
\end{align*}
and $
\hat{\mathbf{x}}_j(z)\in\arg\min_{\mathbf{x}\in\cX}\hat{F}_{z,j}(\mathbf{x})$.
Since \(F_{z,j}^\star\) is \(1\)-strongly convex with respect to
\(\|\cdot\|_{A_j(z)}\), and \(\mathbf{x}_j^\star(z)\) minimizes \(F_{z,j}^\star\) over
\(\cX\), we have
\begin{align}
F_{z,j}^\star(\hat{\mathbf{x}}_j(z)) - F_{z,j}^\star(\mathbf{x}_j^\star(z))
\ge
\frac12\|\hat{\mathbf{x}}_j(z)-\mathbf{x}_j^\star(z)\|_{A_j(z)}^2.
\label{eq:block-comp-1}
\end{align}
Likewise, since \(\hat{F}_{z,j}\) is \(1\)-strongly convex with respect to
\(\|\cdot\|_{A_{j}(z-3)}\), and \(\hat{\mathbf{x}}_j(z)\) minimizes \(\hat{F}_{z,j}\) over
\(\cX\), we have
\begin{align}
\hat{F}_{z,j}(\mathbf{x}_j^\star(z))-\hat{F}_{z,j}(\hat{\mathbf{x}}_j(z))
\ge
\frac12\|\hat{\mathbf{x}}_j(z)-\mathbf{x}_j^\star(z)\|_{A_j(z-3)}^2.
\label{eq:block-comp-2}
\end{align}
Adding \pref{eq:block-comp-1} and \pref{eq:block-comp-2} is equivalent to applying the stability lemma of \citet[Lemma~A.2]{qiu2025exploiting} to these two regularized objectives. This is a standard strong-convexity argument, not a new ingredient; we include the derivation to make the treatment of the two decisions explicit.

Summing \pref{eq:block-comp-1} and \pref{eq:block-comp-2}, and expanding the
definitions of \(F_{z,j}^\star\) and \(\hat{F}_{z,j}\), yields
\begin{align*}
&\frac12\|\hat{\mathbf{x}}_j(z)-\mathbf{x}_j^\star(z)\|_{A_j(z)}^2
+
\frac12\|\hat{\mathbf{x}}_j(z)-\mathbf{x}_j^\star(z)\|_{A_{j}(z-3)}^2 \\
&\le
\sum_{s=z-2}^{z}
\langle \bg_j(s),\hat{\mathbf{x}}_j(z)-\mathbf{x}_j^\star(z)\rangle \\
&\quad
+
\frac{\beta}{2L}
\sum_{s=z-2}^{z}
\Bigl(
\langle \bg_j(s),\mathbf{x}_j(s)-\hat{\mathbf{x}}_j(z)\rangle^2
-
\langle \bg_j(s),\mathbf{x}_j(s)-\mathbf{x}_j^\star(z)\rangle^2
\Bigr).
\end{align*}
Using \(a^2-b^2=(a-b)(a+b)\), with
$
a=\langle \bg_j(s),\mathbf{x}_j(s)-\hat{\mathbf{x}}_j(z)\rangle
$ and $ 
b=\langle \bg_j(s),\mathbf{x}_j(s)-\mathbf{x}_j^\star(z)\rangle
$
we obtain
\begin{align*}
&\frac12\|\hat{\mathbf{x}}_j(z)-\mathbf{x}_j^\star(z)\|_{A_j(z)}^2
+
\frac12\|\hat{\mathbf{x}}_j(z)-\mathbf{x}_j^\star(z)\|_{A_{j}(z-3)}^2 \\
&\le
\sum_{s=z-2}^{z}
\bigl|\langle \bg_j(s),\hat{\mathbf{x}}_j(z)-\mathbf{x}_j^\star(z)\rangle\bigr| \\
&\qquad
+
\frac{\beta}{2L}
\sum_{s=z-2}^{z}
\bigl|\langle \bg_j(s),\hat{\mathbf{x}}_j(z)-\mathbf{x}_j^\star(z)\rangle\bigr|
\cdot
\bigl|\langle \bg_j(s),\hat{\mathbf{x}}_j(z)+\mathbf{x}_j^\star(z)-2\mathbf{x}_j(s)\rangle\bigr|.
\end{align*}
Using \pref{asm: cvx} and \pref{asm: domain},
\begin{equation*}
\bigl|\langle \bg_j(s),\hat{\mathbf{x}}_j(z)+\mathbf{x}_j^\star(z)-2\mathbf{x}_j(s)\rangle\bigr| \le
\|\bg_j(s)\|_2\,
\|\hat{\mathbf{x}}_j(z)+\mathbf{x}_j^\star(z)-2\mathbf{x}_j(s)\|_2
\le 2LGD,
\end{equation*}
and therefore
\begingroup\medmuskip=2mu\thickmuskip=3mu
\begin{align*}
\frac12\|\hat{\mathbf{x}}_j(z)-\mathbf{x}_j^\star(z)\|_{A_j(z)}^2
+
\frac12\|\hat{\mathbf{x}}_j(z)-\mathbf{x}_j^\star(z)\|_{A_{j}(z-3)}^2 \le
\left(1+\beta GD\right)
\sum_{s=z-2}^{z}
\bigl|\langle \bg_j(s),\hat{\mathbf{x}}_j(z)-\mathbf{x}_j^\star(z)\rangle\bigr|.
\end{align*}
\endgroup
Applying Cauchy--Schwarz in the pair of dual norms induced by \(A_j(z)\),
\[
\bigl|\langle \bg_j(s),\hat{\mathbf{x}}_j(z)-\mathbf{x}_j^\star(z)\rangle\bigr|
\le
\|\bg_j(s)\|_{A_j(z)^{-1}}
\|\hat{\mathbf{x}}_j(z)-\mathbf{x}_j^\star(z)\|_{A_j(z)},
\]
we get
\begin{align*}
&\frac12\|\hat{\mathbf{x}}_j(z)-\mathbf{x}_j^\star(z)\|_{A_j(z)}^2
+
\frac12\|\hat{\mathbf{x}}_j(z)-\mathbf{x}_j^\star(z)\|_{A_{j}(z-3)}^2 \\
&\le
\left(1+\beta GD\right)
\left(
\sum_{s=z-2}^{z}
\|\bg_j(s)\|_{A_j(z)^{-1}}
\right)
\|\hat{\mathbf{x}}_j(z)-\mathbf{x}_j^\star(z)\|_{A_j(z)}.
\end{align*}
Dropping the second nonnegative term on the left-hand side gives
\[
\|\hat{\mathbf{x}}_j(z)-\mathbf{x}_j^\star(z)\|_{A_j(z)}
\le
2\left(1+\beta GD\right)
\sum_{s=z-2}^{z}\|\bg_j(s)\|_{A_j(z)^{-1}}.
\]
Consequently,
\begin{align}
\langle \bg_j(z), \hat{\mathbf{x}}_j(z)-\mathbf{x}_j^\star(z)\rangle
&\le
\|\bg_j(z)\|_{A_j(z)^{-1}}
\|\hat{\mathbf{x}}_j(z)-\mathbf{x}_j^\star(z)\|_{A_j(z)} \nonumber\\
&\le
2\left(1+\beta GD\right)
\|\bg_j(z)\|_{A_j(z)^{-1}}
\sum_{s=z-2}^{z} \|\bg_j(s)\|_{A_j(z)^{-1}}.
\label{eq:block-comp-final}
\end{align}
Summing \pref{eq:block-comp-final} over \(z=1,\dots,T/L\), we obtain
\begin{align}
   \sum_{z=1}^{T/L}\langle \bg_j(z), \hat{\mathbf{x}}_j(z)-\mathbf{x}_j^\star(z)\rangle
&\le
2\left(1+\beta GD\right)
\sum_{z=1}^{T/L}
\left(
\|\bg_j(z)\|_{A_j(z)^{-1}}
\sum_{s=z-2}^{z}\|\bg_j(s)\|_{A_j(z)^{-1}}
\right) \notag\\
&\le
\left(1+\beta GD\right)
\sum_{z=1}^{T/L}
\left(
3\|\bg_j(z)\|_{A_j(z)^{-1}}^2
+
\sum_{s=z-2}^{z}\|\bg_j(s)\|^2_{A_j(z)^{-1}}
\right)
\tag{AM-GM inequality}\\
&\le
\left(1+\beta GD\right)
\sum_{z=1}^{T/L}
\left(
3\|\bg_j(z)\|_{A_j(z)^{-1}}^2
+
\sum_{s=z-2}^{z}\|\bg_j(s)\|^2_{A_j(s)^{-1}}
\right)
\tag{$A_j(s) \preceq A_j(z)$}\\
&\le
6\left(1+\beta GD\right)
\sum_{z=1}^{T/L}
\|\bg_j(z)\|_{A_j(z)^{-1}}^2 \notag\\
&\le
\frac{6d\left(1+\beta GD\right)L}{\beta}
\log{\left(
1+ \frac{\beta TG^2}{\eta d}
\right)}.
\label{eq:delayons_2}
\end{align}
where the last inequality holds by using \citet[Lemma~7.29]{orabona2026}.

Substituting \pref{eq: delayons_1} and  \pref{eq:delayons_2} in to \pref{eq:delayons_dec}, we have 
\begin{align}
        &\sum_{z=1}^{T/L}\left( \left\langle \mathbf{g}_j(z), \hat{\mathbf{x}}(z)-\mathbf{x}\right\rangle-\frac{\beta}{2L}
\inner{\bg_j(z), \mathbf{x}_j(z)- \mathbf{x}}^2\right) \notag\\
&\leq \psi_{T/L,j}^\star(\mathbf{x})  -\sum_{z=1}^{T/L} \frac{\beta}{2L}
\inner{\bg_j(z), \mathbf{x}_j(z)-\mathbf{x}}^2 +  \frac{6d\left(1+\beta GD\right)L}{\beta} \log{\left(1+ \frac{\beta TG^2}{\eta d}\right)} \notag\\
&= \frac{\eta}{2}\|\mathbf{x}\|^2_2+  \frac{6d\left(1+\beta GD\right)L}{\beta} \log{\left(1+ \frac{\beta TG^2}{\eta d}\right)} \notag\\
&=  \mathcal{O} \left (\eta\|\mathbf{x}\|^2_2+  \frac{d\left(1+\beta GD\right)L}{\beta} \log{\left(1+ \frac{\beta TG^2}{\eta d}\right)} \right).
        \end{align}
\end{proof}

We now prove \pref{thm:upper-exp} by combining the regret decomposition with the
delayed ONS bound.

\mainexpconcave*

\begin{proof}
Fix any agent \(i\in[N]\) and any comparator \(\mathbf{x}\in\mathcal X\).
By \pref{lem:regret-decomposition},
\[
\Reg_{T,i}
\le
\sum_{z=1}^{T/L}\sum_{j=1}^N
\left(
\left\langle \mathbf{g}_j(z),\widehat{\mathbf{x}}_j(z)-\mathbf{x}\right\rangle
-
\frac{\beta}{2L}
\left\langle \mathbf{g}_j(z),\mathbf{x}_j(z)-\mathbf{x}\right\rangle^2
\right)
+
\mathcal{O}\left(
\frac{GD}{NT}
+
\frac{\beta G^2D^2}{NT}
\right).
\]
Applying \pref{lem:block-ons-exp} to each \(j\in[N]\) gives
\[
\Reg_{T,i}(\mathbf{x})
=
\mathcal{O}\left(
\eta N\|\mathbf{x}\|_2^2
+
\frac{Nd(1+\beta GD)L}{\beta}
\log\left(
1+\frac{\beta G^2T}{\eta d}
\right)
\right).
\]
This proves \pref{eqn:reg-exp-result}. Substituting the choice of \(L\) in
\pref{eqn:para-choice} gives the stated dependence on the spectral gap.
\end{proof}

\subsection{Omitted Details in \pref{sec: lower_exp}}\label{app: lower_exp}
We now turn to the proof of the lower bound in \pref{thm:lowerbound}. For completeness, we restate the lower-bound theorem and provide its proof.
\mainlowerbound*
\begin{proof}
We prove the lower bound on a $1$-connected cycle graph. By definition, we have $K=\frac{N-2}{4}$.
Let
$$
    Z=\left\lfloor \frac{T-1}{K}\right\rfloor,
    \qquad
    c_i=iK,\quad i=0,\ldots,Z,
    \qquad
    c_{Z+1}=T.
$$
We divide the horizon into intervals
$$
    [c_0+1,c_1],\ [c_1+1,c_2],\ \ldots,\ [c_{Z-1}+1,c_Z],
$$
each of length $K$, and one remaining final interval
$
    [c_Z+1,c_{Z+1}].
$

The loss functions for each agent $n\in[N]$ at each $t\in[c_i+1,c_{i+1}]$ are as follows. Specifically, we choose $h_0,\ldots,h_{Z-1}$ as the wealth function in portfolio optimization defined in ~\cite[Theorem~1]{ordentlich1998cost} with horizon $n=Z$ but scaled by $\frac{1}{\alpha}$. Then, for each $i=0,\ldots,Z-1$ and each
$t\in[c_i+1,c_{i+1}]$, define the local losses on the cycle as follows:
\begin{align*}
    &f_{t,1}(x)=\cdots=f_{t,K}(x)=0,\\
    &f_{t,K+1}(x)=\cdots=f_{t,N-K+1}(x)=h_i(x),\\
    &f_{t,N-K+2}(x)=\cdots=f_{t,N}(x)=0.
\end{align*}
For the final interval $t\in[c_Z+1,c_{Z+1}]$, set $f_{t,j}(x)=0$ for every $j\in[N]$. Direct calculation shows that each local loss function is convex and $\alpha$-exp-concave, since each local loss is
either one of the hard losses $h_i$ or the zero function.

For every $i=0,\ldots,Z-1$ and every
$t\in[c_i+1,c_{i+1}]$, the global loss is
$$
    f_t(x)
    =
    \sum_{j=1}^N f_{t,j}(x)
    =
    S h_i(x),
$$
where $S = N-2K+1=2K+3$. In terms of $N$, since $K=(N-2)/4$, we know that
$$
    SK
    =
    \frac{N-2}{4}\left(\frac{N-2}{2}+3\right)
    =
    \frac{(N-2)(N+4)}{8}
    \ge
    \frac{N^2}{8},
$$
for every $N\ge 4$.

Now consider agent $1$. On the $1$-connected cycle, every node $n\in \{K+1,\ldots,N-K+1\}$
is at graph distance at least $K$ from agent $1$. Since the length of $[c_i+1,c_{i+1}]$ is exactly $K$, agent $1$ cannot receive information generated by the
current hidden loss $h_i$ before making the decisions $x_1(c_i+1),\ldots,x_1(c_{i+1})$.

Now we define the averaged decision of agent $1$ during interval $[c_i+1,c_{i+1}]$ as $
    \overline x_1(i)=\frac1K\sum_{t=c_i+1}^{c_{i+1}} x_1(t).$

Since the decision set $\calX$ is convex, we have $\overline x_1(i)\in\calX$.
Moreover, $\overline x_1(i)$ is a valid online decision against the
centralized loss sequence $h_0,\ldots,h_{i-1}$ at round $i$, since $\overline x_1(i)$ does not have the knowledge of $h_i$. By convexity of $h_i$, we know that
$$
    h_i(\overline x_1(i))
    =
    h_i\left(
        \frac1K\sum_{t=c_i+1}^{c_{i+1}}x_1(t)
    \right)
    \le
    \frac1K\sum_{t=c_i+1}^{c_{i+1}}h_i(x_1(t)).
$$
Multiplying both sides by $SK$ and summing over $i=0,\ldots,Z-1$, we obtain that
$$
    SK \sum_{i=0}^{Z-1}h_i(\overline x_i)
    \le
    \sum_{t=c_i+1}^{c_{i+1}} \sum_{i=0}^{Z-1}S h_i(x_1(t)) = \sum_{t=c_i+1}^{c_{i+1}} \sum_{i=0}^{Z-1}f_t(x_1(t)) = \sum_{t=1}^Tf_t(x_1(t)).
$$

On the other hand, for any comparator $x\in\calX$, we have
$$
    \sum_{t=1}^T f_t(x)
    =
    \sum_{i=0}^{Z-1}
    \sum_{t=c_i+1}^{c_{i+1}} S h_i(x)
    =
    SK\sum_{i=0}^{Z-1}h_i(x).
$$
Thus
$\min_{x\in\mathcal K}\sum_{t=1}^T f_t(x)
    =
    SK\min_{x\in\mathcal K}\sum_{i=0}^{Z-1}h_i(x).$
Combining the two displays, we get
$$
\begin{aligned}
    \Reg_{T,1}
    =
    \sum_{t=1}^T f_t(x_1(t))
    -
    \min_{x\in\mathcal K}\sum_{t=1}^T f_t(x) \ge
    SK
    \left(
        \sum_{i=0}^{Z-1}h_i(\overline x_i)
        -
        \min_{x\in\mathcal K}\sum_{i=0}^{Z-1}h_i(x)
    \right).
\end{aligned}
$$

Taking expectation over the sampled losses and any internal randomness of the
distributed algorithm, we know that the loss sequence $h_0,\dots,h_{Z-1}$ defined in~\citet{ordentlich1998cost},
$$
    \mathbb E[R_{T,1}]
    \ge
    SK\cdot \Omega\left(\frac {d}{ \alpha} \log(Z)\right).
$$
Since $SK\ge N^2/8$, this implies that
$
    \mathbb E[R_{T,1}]
    \ge
    \Omega\left(\frac{N^2d}{\alpha}\log(Z)\right)=\Omega(\frac{N^2d}{\alpha}\log(T/N)).
$
Finally, for the $1$-connected cycle graph with $N$, 
communication matrix $P= I-\frac{1}{2(d_{\max}+1)}L_{\mathcal{G}}$,  \cite[Lemma~6]{wan2024nearly} shows that
$
    \frac{\pi^2}{1-\sigma_2(P)}
    \le
    4N^2,
$
implying that
$$
    \mathbb E[R_{T,1}]
    \geq 
    \Omega\left(
        \frac{Nd}{\alpha\sqrt{1-\sigma_2(P)}}
        \log(T/N)
    \right).
$$
This completes the proof.
\end{proof}

\section{Omitted Details from \pref{sec:dolr}}
\label{app:appdolr}

In this section, we show the omitted details in \pref{sec:dolr}. For completeness, we restate the theorem and provide its proof.

\theoremlr*

\begin{proof}
For each block $z$ and round $t\in\mathcal T_z$, we define the reference decision $\mathbf x^\star(z,t)$ as the minimizer of
\[
F_{z,t}^\star(\mathbf x)=\frac\lambda2\|\mathbf x\|_2^2+\sum_{s\in\mathcal S_z}\sum_{\tau\in\mathcal T_s}\ell_\tau(\mathbf x)+\sum_{\substack{\tau\in\mathcal T_z\\\tau\le t}}\ell_\tau(\mathbf x),
\qquad \ell_t(\mathbf x)=\sum_{j=1}^N f_{t,j}(\mathbf x).
\]
This is the reference predictor defined in the proof sketch: it represents the ridge-regression decision with full knowledge of all observations assigned to the same copy through the end of round $t$.

With the definition of $\mathbf x^\star(z,t)$, we decompose the regret as follows:
\begin{align}\label{eqn:OLR-regret-decomp}
\Reg_{T,i}^{\mathrm{OLR}}(\mathbf x)
&=\sum_{z=1}^{\lceil T/L\rceil}\sum_{t\in\mathcal T_z}\sum_{j=1}^N\left[\widetilde f_{t,j}^{\,i}(z,t)-f_{t,j}(\mathbf x)\right]\nonumber\\
&=\sum_{z=1}^{\lceil T/L\rceil}\sum_{t\in\mathcal T_z}\sum_{j=1}^N\left[f_{t,j}(\mathbf x^\star(z,t))-f_{t,j}(\mathbf x)\right]\nonumber\\
&\quad+\sum_{z=1}^{\lceil T/L\rceil}\sum_{t\in\mathcal T_z}\sum_{j=1}^N\left[\widetilde f_{t,j}^{\,i}(z,t)-f_{t,j}(\mathbf x^\star(z,t))\right],
\end{align}
where $\widetilde f_{t,j}^{\,i}(z,t)\triangleq\frac12(\widehat y_{t,j}^{\,i}(z,t)-y_{t,j})^2$ and
$\widehat y_{t,j}^{\,i}(z,t)\triangleq\Pi_{[-Y,Y]}(\langle\mathbf h_{t,j},\mathbf x_i(z,t)\rangle)$.

Consider the first summation in \pref{eqn:OLR-regret-decomp}. Within each copy, the reference decisions minimize the cumulative squared loss plus $\lambda\|\mathbf x\|_2^2/2$. Applying \pref{lem:btrl} separately to the two subsequences, with this regularizer after initialization and zero initial regularizer, gives
\begin{equation}\label{eq:btl}
\sum_{z=1}^{\lceil T/L\rceil}\sum_{t\in\mathcal T_z}\sum_{j=1}^N\left[f_{t,j}(\mathbf x^\star(z,t))-f_{t,j}(\mathbf x)\right]\le\lambda\|\mathbf x\|_2^2.
\end{equation}
Consequently,
\begin{equation}\label{eqn:OLR-regret-decomp2}
\Reg_{T,i}^{\mathrm{OLR}}(\mathbf x)\le\lambda\|\mathbf x\|_2^2+\sum_{z=1}^{\lceil T/L\rceil}\sum_{t\in\mathcal T_z}\sum_{j=1}^N\left[\widetilde f_{t,j}^{\,i}(z,t)-f_{t,j}(\mathbf x^\star(z,t))\right].
\end{equation}

We next explain the role of clipping before controlling the delay and gossip terms. For notational convenience, denote
\[
p_{t,j}^{\,i}(z,t)\triangleq\langle\mathbf h_{t,j},\mathbf x_i(z,t)\rangle,
\qquad
p_{t,j}^{\star}(z,t)\triangleq\langle\mathbf h_{t,j},\mathbf x^\star(z,t)\rangle.
\]
For $|y|\le Y$, define the clipped scalar loss $\phi_y(u)\triangleq\frac12(\Pi_{[-Y,Y]}(u)-y)^2$.
Since $y\in[-Y,Y]$, projection onto $[-Y,Y]$ does not increase the squared distance to $y$, and hence
\[
\phi_y(u)\le\frac12(u-y)^2,\qquad \forall u\in\mathbb R.
\]
Moreover, $\phi_y$ is $2Y$-Lipschitz: for all $u,v\in\mathbb R$,
\begin{align*}
|\phi_y(u)-\phi_y(v)|
&=\frac12\left|\Pi_{[-Y,Y]}(u)-\Pi_{[-Y,Y]}(v)\right|\left|\Pi_{[-Y,Y]}(u)+\Pi_{[-Y,Y]}(v)-2y\right|\\
&\le\frac12|u-v|\cdot4Y=2Y|u-v|.
\end{align*}
Therefore, we have
\begin{align}\label{eqn:relaxY}
\widetilde f_{t,j}^{\,i}(z,t)-f_{t,j}(\mathbf x^\star(z,t))
&=\phi_{y_{t,j}}(p_{t,j}^{\,i}(z,t))-\frac12(p_{t,j}^{\star}(z,t)-y_{t,j})^2\nonumber\\
&\le\phi_{y_{t,j}}(p_{t,j}^{\,i}(z,t))-\phi_{y_{t,j}}(p_{t,j}^{\star}(z,t))\nonumber\\
&\le2Y|p_{t,j}^{\,i}(z,t)-p_{t,j}^{\star}(z,t)|\nonumber\\
&=2Y\left|\langle\mathbf h_{t,j},\mathbf x_i(z,t)-\mathbf x^\star(z,t)\rangle\right|.
\end{align}
Thus, the squared-loss comparison does not require a bound on either model or its raw prediction. We will use this Lipschitz property for the gossip error. For the delay of both sufficient statistics, we retain the quadratic information in the regularized objectives, as detailed next.

Define the auxiliary exact delayed predictor by
\begin{equation}\label{eqn:defXprime}
\mathbf x'(z,t)=A(z)^{-1}b(z),\qquad
A(z)=\lambda I_d+\sum_{s\in\mathcal S_z}\sum_{j=1}^N C_j^0(s),\qquad
b(z)=\sum_{s\in\mathcal S_z}\sum_{j=1}^N Y_j^0(s).
\end{equation}
It uses exactly the blocks available to the corresponding copy, with exact averaging in place of gossip. In particular, it is fixed throughout block $z$. We separate the regret into the regret of this exact delayed predictor and its discrepancy from the decentralized predictor:
\begin{align}\label{eq:clipped-lr-split}
\Reg_{T,i}^{\mathrm{OLR}}(\mathbf x)
&=\sum_{z=1}^{\lceil T/L\rceil}\sum_{t\in\mathcal T_z}\sum_{j=1}^N\left[\phi_{y_{t,j}}(\langle\mathbf h_{t,j},\mathbf x'(z,t)\rangle)-f_{t,j}(\mathbf x)\right]\nonumber\\
&\quad+\sum_{z=1}^{\lceil T/L\rceil}\sum_{t\in\mathcal T_z}\sum_{j=1}^N\left[\widetilde f_{t,j}^{\,i}(z,t)-\phi_{y_{t,j}}(\langle\mathbf h_{t,j},\mathbf x'(z,t)\rangle)\right].
\end{align}
The first term accounts for the delay of the current block. By the clipping argument above, the second term is at most
\begin{equation}\label{eq: clipped_lr_decom}
2Y\sum_{z=1}^{\lceil T/L\rceil}\sum_{t\in\mathcal T_z}\sum_{j=1}^N\left|\langle\mathbf h_{t,j},\mathbf x_i(z,t)-\mathbf x'(z,t)\rangle\right|.
\end{equation}

We first consider the exact delayed predictor. Fix a block $z$ and its assigned copy, and define the regularized objectives immediately before and after this block by
\[
F_z^-(\mathbf x)=\frac\lambda2\|\mathbf x\|_2^2+\sum_{s\in\mathcal S_z}\sum_{\tau\in\mathcal T_s}\ell_\tau(\mathbf x),
\qquad
F_z^+(\mathbf x)=F_z^-(\mathbf x)+\sum_{t\in\mathcal T_z}\ell_t(\mathbf x).
\]
Let $M_z^-\triangleq\min_{\mathbf x}F_z^-(\mathbf x)$ and $M_z^+\triangleq\min_{\mathbf x}F_z^+(\mathbf x)$. The minimizer of $F_z^-$ is $\mathbf x'(z,t)$, whereas that of $F_z^+$ is the reference predictor at the last round of the block. Successive minimum increments telescope within each copy. Since each final minimum is at most the final objective evaluated at any comparator $\mathbf x$, we have
\begin{equation}\label{eq:lr-minimum-telescope}
\sum_{z=1}^{\lceil T/L\rceil}(M_z^+-M_z^-)\le\lambda\|\mathbf x\|_2^2+\sum_{t=1}^T\sum_{j=1}^N f_{t,j}(\mathbf x).
\end{equation}
This is the block form of the centralized regularized-leader comparison. Retaining these minimum increments lets us account for the delay in the covariance as well as in the labels.

To bound the loss within a block, stack its $N|\mathcal T_z|$ feature vectors into the rows of $B_z$, and its labels into $\mathbf y_z$. Completing the square around the minimizer $\mathbf x'(z,t)$ gives
\begin{align*}
M_z^+-M_z^-
&=\min_{\mathbf v}\left\{\frac12\|\mathbf v\|_{A(z)}^2+\frac12\|B_z\mathbf x'(z,t)+B_z\mathbf v-\mathbf y_z\|_2^2\right\}\\
&=\frac12(B_z\mathbf x'(z,t)-\mathbf y_z)^\top\bigl(I+B_zA(z)^{-1}B_z^\top\bigr)^{-1}(B_z\mathbf x'(z,t)-\mathbf y_z).
\end{align*}
The second equality follows by minimizing the quadratic and applying the matrix-inversion identity. Since the smallest eigenvalue of the inverse matrix is $1/(1+\|B_zA(z)^{-1}B_z^\top\|_{\mathrm{op}})$, and clipping cannot increase the squared loss,
\[
M_z^+-M_z^-\ge
\frac{\displaystyle\sum_{t\in\mathcal T_z}\sum_{j=1}^N\phi_{y_{t,j}}(\langle\mathbf h_{t,j},\mathbf x'(z,t)\rangle)}{1+\|B_zA(z)^{-1}B_z^\top\|_{\mathrm{op}}}.
\]
Each clipped squared loss is at most $2Y^2$, and the block contains at most $LN$ examples. Consequently,
\begin{align}\label{eq:lr-block-potential}
&\sum_{t\in\mathcal T_z}\sum_{j=1}^N\phi_{y_{t,j}}(\langle\mathbf h_{t,j},\mathbf x'(z,t)\rangle)-(M_z^+-M_z^-)\nonumber\\
&\le2Y^2LN\frac{\|B_zA(z)^{-1}B_z^\top\|_{\mathrm{op}}}{1+\|B_zA(z)^{-1}B_z^\top\|_{\mathrm{op}}}\nonumber\\
&\le2Y^2LN\log\!\left(\frac{\det(A(z)+B_z^\top B_z)}{\det A(z)}\right).
\end{align}
For the last inequality, use the nonnegative eigenvalues of $B_zA(z)^{-1}B_z^\top$ and the determinant identity $\det(I+B_zA(z)^{-1}B_z^\top)=\det(A(z)+B_z^\top B_z)/\det A(z)$.

It remains to sum the log-determinant increases. Within each copy, $A(z)+B_z^\top B_z$ is precisely the covariance matrix before its next assigned block, so these increases telescope. The trace of either copy's final unregularized covariance is at most $NTH^2$. The arithmetic--geometric mean inequality for the eigenvalues therefore gives, for each copy, a total log-determinant increase at most $d\log(1+NTH^2/(\lambda d))$. This is the elliptical-potential argument at block level; the factor $LN$ accounts for the examples predicted before the copy receives its next statistics. Combining both copies with \pref{eq:lr-minimum-telescope} and \pref{eq:lr-block-potential} yields
\begin{equation}\label{eq:olr_delay}
\sum_{z=1}^{\lceil T/L\rceil}\sum_{t\in\mathcal T_z}\sum_{j=1}^N\left[\phi_{y_{t,j}}(\langle\mathbf h_{t,j},\mathbf x'(z,t)\rangle)-f_{t,j}(\mathbf x)\right]\le\lambda\|\mathbf x\|_2^2+4Y^2LNd\log\left(1+\frac{NTH^2}{\lambda d}\right).
\end{equation}

We now consider the gossip term in \pref{eq: clipped_lr_decom}. Recall that
\[
\widehat A_i(z)=\lambda I_d+N\sum_{s\in\mathcal S_z}\Pi_{\mathbb S_+^d}(C_i(s)),
\qquad \widehat b_i(z)=N\sum_{s\in\mathcal S_z}Y_i(s),
\qquad \mathbf x_i(z,t)=\widehat A_i(z)^{-1}\widehat b_i(z).
\]
Both $\widehat A_i(z)$ and $A(z)$ dominate $\lambda I_d$. Subtracting the exact delayed predictor and applying the matrix-inverse identity gives
\begin{align*}
\mathbf x_i(z,t)-\mathbf x'(z,t)
&=\widehat A_i(z)^{-1}(\widehat b_i(z)-b(z))+
\bigl(\widehat A_i(z)^{-1}-A(z)^{-1}\bigr)b(z)\\
&=\widehat A_i(z)^{-1}(\widehat b_i(z)-b(z))-
\widehat A_i(z)^{-1}(\widehat A_i(z)-A(z))A(z)^{-1}b(z).
\end{align*}
Therefore, by Cauchy--Schwarz,
\begin{equation}\label{eq:lr-inverse-perturbation}
|\langle\mathbf h_{t,j},\mathbf x_i(z,t)-\mathbf x'(z,t)\rangle|
\le\|\mathbf h_{t,j}\|_2\left(\frac{\|\widehat b_i(z)-b(z)\|_2}{\lambda}+\frac{\|\widehat A_i(z)-A(z)\|_{\mathrm{op}}\|b(z)\|_2}{\lambda^2}\right).
\end{equation}
Compared with the label-statistic calculation, the second term accounts for gossiping the covariance matrix.

By \pref{asm:lr}, $|y_{\tau,i}|\le Y$ and $\|\mathbf h_{\tau,i}\|_2\le H$. Hence,
\[
\|Y_i^0(s)\|_2\le\sum_{\tau\in\mathcal T_s}|y_{\tau,i}|\|\mathbf h_{\tau,i}\|_2\le LYH,
\qquad
\|C_i^0(s)\|_F\le\sum_{\tau\in\mathcal T_s}\|\mathbf h_{\tau,i}\mathbf h_{\tau,i}^{\top}\|_F\le LH^2.
\]
Apply the accelerated-gossip contraction used in the proof of \pref{lem:gossip-close-exp}, also to the vectorized covariance matrices. The choice of $L$ in \pref{eq:lr-block-length} makes the contraction factor at most $1/[T^3N^{7/2}(1+H^2/\lambda)]$. The initial distance to the average is at most $2\sqrt N$ times the displayed bound on an individual statistic. Thus, for every completed block $s$,
\begin{align*}
\left\|Y_i(s)-\frac1N\sum_{j=1}^N Y_j^0(s)\right\|_2&\le\frac{2LYH}{T^3N^3(1+H^2/\lambda)},\\
\left\|C_i(s)-\frac1N\sum_{j=1}^N C_j^0(s)\right\|_F&\le\frac{2LH^2}{T^3N^3(1+H^2/\lambda)}.
\end{align*}
The exact average covariance is positive semidefinite. Non-expansiveness of its metric projection in Frobenius norm therefore gives
\[
\left\|\Pi_{\mathbb S_+^d}(C_i(s))-\frac1N\sum_{j=1}^N C_j^0(s)\right\|_F
\le\left\|C_i(s)-\frac1N\sum_{j=1}^N C_j^0(s)\right\|_F.
\]
All blocks in $\mathcal S_z$ are complete, and $L|\mathcal S_z|\le T$. Summing the errors of completed blocks yields
\begin{align*}
\|\widehat b_i(z)-b(z)\|_2
&\le N\sum_{s\in\mathcal S_z}\left\|Y_i(s)-\frac1N\sum_{j=1}^N Y_j^0(s)\right\|_2
\le\frac{2YH}{T^2N^2(1+H^2/\lambda)},\\
\|\widehat A_i(z)-A(z)\|_{\mathrm{op}}
&\le N\sum_{s\in\mathcal S_z}\left\|\Pi_{\mathbb S_+^d}(C_i(s))-\frac1N\sum_{j=1}^N C_j^0(s)\right\|_F
\le\frac{2H^2}{T^2N^2(1+H^2/\lambda)}.
\end{align*}
Moreover, $\|b(z)\|_2\le NTYH$. Substituting these estimates into \pref{eq:lr-inverse-perturbation}, and using $TN\ge1$, gives
\[
|\langle\mathbf h_{t,j},\mathbf x_i(z,t)-\mathbf x'(z,t)\rangle|
\le\frac{2YH}{\lambda TN}\|\mathbf h_{t,j}\|_2.
\]
Consequently,
\begin{align}\label{eq:lr_cons}
2Y\sum_{z=1}^{\lceil T/L\rceil}\sum_{t\in\mathcal T_z}\sum_{j=1}^N
|\langle\mathbf h_{t,j},\mathbf x_i(z,t)-\mathbf x'(z,t)\rangle|
&\le\frac{4Y^2H}{\lambda TN}\sum_{z=1}^{\lceil T/L\rceil}\sum_{t\in\mathcal T_z}\sum_{j=1}^N\|\mathbf h_{t,j}\|_2\nonumber\\
&\le\frac{4Y^2H^2}{\lambda}.
\end{align}
The last inequality uses $\|\mathbf h_{t,j}\|_2\le H$ and the fact that the blocks partition $[T]$.

Combining \pref{eq:clipped-lr-split}, \pref{eq: clipped_lr_decom}, \pref{eq:olr_delay}, and \pref{eq:lr_cons}, we have
\begin{equation}
\Reg_{T,i}^{\mathrm{OLR}}(\mathbf x)\le\lambda\|\mathbf x\|_2^2+\frac{4Y^2H^2}{\lambda}+4Y^2LNd\log\left(1+\frac{NTH^2}{\lambda d}\right),
\end{equation}
which proves \pref{thm: lr}.
\end{proof}

\end{document}